\def\eqref#1{equation~\ref{#1}}
\def\1{\bm{1}}
\DeclareMathAlphabet{\mathsfit}{\encodingdefault}{\sfdefault}{m}{sl}
\SetMathAlphabet{\mathsfit}{bold}{\encodingdefault}{\sfdefault}{bx}{n}
\def\gN{{\mathcal{N}}}
\newcommand{\E}{\mathbb{E}}
\DeclareMathOperator*{\argmin}{arg\,min}
\newtheorem{theorem}{Theorem}
\newtheorem{definition}{Definition}
\newcommand{\cmark}{\ding{51}}%
\newcommand{\xmark}{\ding{55}}%
\title{On the Generalization of Training-based ChatGPT Detection Methods}
\author{Han Xu, Jie Ren, Pengfei He, Yingqian Cui, Shenglai Zeng, Hui Liu \& Jiliang Tang \\
Department of Computer Science \& Engineering, Michigan State University\\
East Lansing, MI 48823, USA \\
\small{\texttt{\{xuhan1, renjie3, hepengf1, cuiyingq, zengshe1, liuhui7, tangjili\}@msu.edu}} \\
\And
Amy Liu\\
Okemos High School, Okemos, Michigan 48864, USA \\
}
\newcommand{\gray}[1]{\textcolor{gray}{#1}}
\newcommand{\blue}[1]{\textbf{\textcolor{purple}{#1}}}
\newcommand{\red}[1]{\textcolor{blue}{#1}}
\begin{document}

\maketitle

\begin{abstract}
ChatGPT is one of the most popular language models which achieve amazing performance on various natural language tasks. Consequently, there is also an urgent need to detect the texts generated ChatGPT from human written. One of the extensively studied methods trains classification models to distinguish both. However, existing studies also demonstrate that the trained models may suffer from distribution shifts (during test), i.e., they are ineffective to predict the generated texts from unseen  language tasks or topics. In this work, we aim to have a comprehensive investigation on these methods' generalization behaviors under distribution shift caused by a wide range of factors, including prompts, text lengths, topics, and language tasks. To achieve this goal, we first collect a new dataset with human and ChatGPT texts, and then we conduct extensive studies on the collected dataset. Our studies unveil insightful findings which provide guidance for developing future methodologies or data collection strategies for ChatGPT detection. Our proposed dataset and implementation codes can be found here.\footnote{Code: \url{https://github.com/hannxu123/hcvar}} 
\footnote{Dataset: \url{https://huggingface.co/datasets/hannxu/hc_var}}
\end{abstract}

\vspace{-0.5cm}
\section{Introduction}
\vspace{-0.2cm}

ChatGPT~\citep{chatgpt} is one of the most popular language models, which demonstrates a great versatility to handle diverse language tasks, including question answering~\citep{tan2023evaluation}, creative writing~\citep{bishop2023computer} and personal assistance~\citep{shahriar2023let}. Meanwhile, it also gives rise to an urgent need for detecting ChatGPT generated texts from human written texts to regulate the proper use of ChatGPT. For example, ChatGPT can be misused to accomplish the tasks such as producing fake news or generating fake reviews~\citep{li2023preliminary}, leading to public deception. Similarly, ChatGPT can be also used for plagiarism, offending people's intellectual property~\citep{falati2023chatgpt}. These misuses of ChatGPT can cause severe negative consequences for our society.

Since the model parameters of ChatGPT are not publicly available, many detection techniques for open-source language models (e.g., DetectGPT~\citep{mitchell2023detectgpt}, Watermarks~\citep{kirchenbauer2023watermark}) cannot be utilized for ChatGPT detection. Therefore, a major stream of works~\citep{guo2023close, chen2023gpt, tian2023multiscale} propose to train classification models on the collected human texts and ChatGPT texts to distinguish each other, which we called ``\textit{training-based methods}'' in this work. The empirical studies also demonstrate that the trained classifiers can achieve high detection performance under their studied datasets.

However, it is evident from recent works~\citep{yu2023gpt, guo2023close} that these training-based methods tend to be overfitted to their training data distribution. For instance, \citet{guo2023close} show that a RoBERTa classification model~\citep{liu2019roberta} trained on HC-3 dataset~\citep{guo2023close} for detecting ChatGPT answered questions will exhibit a notable accuracy decrease, when it is tested on some specific topics (i.e., finance and medicine). \citet{yu2023gpt} also find that the detection models trained on HC-3 struggle to detect ChatGPT written news or scientific paper abstracts. In addition, in our work, we also notice that other types of distribution shifts (between training and test distribution) can occur and cause detection performance decrease, which are not identified or adequately discussed in previous works. These distribution shifts include:
\begin{itemize}[itemsep = 0pt, leftmargin=*]
    \item \textit{Prompts to Inquire ChatGPT outputs:} The ChatGPT user can have various prompts to obtain the ChatGPT outputs. For example, when asking ChatGPT to write a movie review, a user can ask ``Write a review for the movie $<$MovieTitle$>$''. Alternatively, they can also let ChatGPT give comments to the movie, via asking ChatGPT to complete a dialogue which reflects the preference of the talkers towards this movie (see Section~\ref{sec:pre} for more details). The detection models that trained on texts obtained from certain prompts may face texts from other unknown prompts.
    \item \textit{Length of ChatGPT outputs:} The ChatGPT user can designate and control the length of the output to inquire longer or shorter generated outputs. It is also possible that the (distribution of) lengths of test samples differ from training ones.
\end{itemize}
In reality, because only a limited number of training data can be collected, the training data cannot fully cover the distribution of test data. Thus, it is critical to deeply understand the detection models' generalization behaviors when the distribution shifts occur. To achieve this goal, we first collect a new text dataset, named \textit{HC-Var (\textbf{H}uman \textbf{C}hatGPT Texts with \textbf{Var}iety)}, which contains human texts and ChatGPT outputs by considering multiple types of variety, including prompts, lengths, topics and language tasks (see Section~\ref{sec:pre}). Facilitated with HC-Var, we can conduct comprehensive analysis on the models' generalization, when facing the aforementioned distribution shifts. Through extensive experiments, we draw key findings and understandings, which provide guidance for developing better methodologies and data collection strategies to assist the success of ChatGPT detection: 
\begin{itemize}[itemsep=0pt, leftmargin=*]
\item From the pessimistic side, we identify one possible reason that can hurt the detection models' generalization. For the training-based methods, the trained models tend to overfit to some ``\textbf{irrelevant features}'' which are not principal for ChatGPT detection. This overfitting issue can be originated from the ``incautious and insufficient'' data collection process, which collects ChatGPT texts that are distinct from human texts in these ``irrelevant features''.
In Section~\ref{sec:theory}, we conduct a theoretical analysis to deeply understand this phenomenon.
\item From an optimistic side, we find the trained models are also capable to extract ``\textbf{transferable features}'', which are shared features that can help detect the ChatGPT generated texts from various topics and language tasks. For example, in Section~\ref{sec:topic}, we show that the models trained on existing topics or language tasks can be leveraged as a source model to accommodate transfer learning~\citep{pan2009survey, hendrycks2019using}, when it is adapted to unforeseen topics and language tasks.
\end{itemize}
\vspace{-0.2cm}
\section{Related Works}
\vspace{-0.2cm}

In this section, we introduce background knowledge about existing methods for ChatGPT generated text detection, as well as other detection methods for open-source language models. We also discuss existing research findings about the generalization of ChatGPT detection methods.

\vspace{-0.2cm}
\subsection{Open-source Language Model Detection and ChatGPT Detection}
\vspace{-0.2cm}

For open-source language models such as GPT-2~\citep{solaiman2019release}, and LLaMa~\citep{touvron2023llama}, since their model parameters are publicly available, information such as model probability scores can be leveraged for detection. For example, DetectGPT~\citep{mitchell2023detectgpt} assumes that LLMs always generate the texts with high probability scores. Thus, it manipulates the candidate texts (by editing or paraphrasing) to check whether the model gives a lower probability score. Besides, there are watermarking strategies~\citep{kirchenbauer2023watermark} which intervene the text generation process to inject watermarks into the generated texts to make them identifiable. 

Detecting ChatGPT generated texts is also an important task because of the extraordinary prevalence of social-wide usage of ChatGPT. However, many previously mentioned methods are not applicable due to the lack of access to ChatGPT's model \& probability scores. Therefore, plenty of works leverage the \textbf{\textit{Training-based Methods}}~\citep{guo2023close, gpt2-detector, chen2023gpt}, to train classification models to predict whether a text $x$ is human-written or ChatGPT generated:
\begin{align}\small
\label{eq:data_dist1}
\begin{split}
\min_f \E \Big[\1 (f(x)\neq y)\Big],~y \sim \{0, 1\},~ x \sim
\begin{cases}
      ~\mathcal{D}_H ~~~~&\text{if $y= 
 0$}\\
      ~\mathcal{D}_C ~~~~&\text{if $y=  1$}\\
    \end{cases} 
\end{split}
\end{align}
where $\mathcal{D}_H$ and $\mathcal{D}_C$ represent the collected human and ChatGPT texts, respectively. 
Besides, there are ``similarity-based'' methods, such as GPT-Pat~\citep{yu2023gpt} and DNA-GPT~\citep{yang2023dna} to compare the similarity of a text $x$ with its ChatGPT re-generated texts. Besides, ``score-based methods'' such as GPT-Zero~\citep{gptzero} and GLTR~\citep{gehrmann2019gltr} detection ChatGPT texts based on their specific traits. More details of these methods are in Appendix~\ref{app:other_methods}.

\vspace{-0.2cm}
\subsection{Training-based ChatGPT Detection under Distribution Shift}
\vspace{-0.2cm}

Notably, our work is not the first work studying or identifying the generalization issues of the training-based ChatGPT detection models. For example, the works~\citep{wang2023m4, yang2023dna, yu2023gpt} have discovered that it is challenging for the detection models to generalize to unseen language tasks and topics. Different from these existing works, we collect a new dataset to include different varieties to support a comprehensive analysis on their generalization. In Section~\ref{sec:topic}, we discuss potential strategies to overcome the distribution shift. Besides, there are also previous works claiming that the models can struggle to predict texts with shorter lengths~\citep{tian2023multiscale, guo2023close}. While, our paper finds that it could be related to a poor HC-Alignment (see Section~\ref{sec:len}) and we provide theoretical understandings (Section~\ref{sec:theory}) about this issue.

\vspace{-0.4cm}
\section{Preliminary}\label{sec:pre}
\vspace{-0.3cm}

In this section, we first introduce the details of our proposed dataset, \textit{HC-Var: \textbf{H}uman and \textbf{C}hatGPT texts with \textbf{Var}iety}. Then we discuss the general experimental setups and evaluation metrics used in the paper. Next, we conduct a preliminary comparison on existing methods under the ``in-distribution'' setting, before we discuss their generalization behaviors.

\vspace{-0.3cm}
\subsection{HC-Var: \textbf{H}uman and \textbf{C}hatGPT texts with \textbf{Var}iety}\label{sec:dataset}
\vspace{-0.3cm}


\begin{wraptable}{position}{0.5\textwidth}
        \vspace{-0.3cm}
    \centering
        \caption{\footnotesize Comparison of Different Datasets}
        \label{tab:dataset_compare}
        \vspace{-0.3cm}
    \resizebox{0.5\textwidth}{!}
{    
    \begin{tabular}{c|cccc}
        \hline\hline
        Dataset & Prompts & Lengths & Topics & Tasks\\
        \hline
        HC-3~\citep{guo2023close} & \xmark & \xmark & \cmark & \xmark \\
        M4~\citep{wang2023m4} & \xmark & \xmark & \cmark & \cmark   \\
        OGT.~\citep{chen2023gpt} & \xmark & \xmark & \xmark & \xmark \\
        HC-Var (Ours) & \cmark& \cmark& \cmark& \cmark\\
        \hline\hline
    \end{tabular}}
            \vspace{-0.3cm}
\end{wraptable}
As discussed, we are motivated to study the generalization of ChatGPT detection when faced with various distribution shifts, including \textit{\textbf{prompts, lengths, topics and language tasks}}. Refer to Table~\ref{tab:dataset_compare}, existing datasets do not sufficiently support this analysis, because they don't cover all types of considered varieties. Therefore, in HC-Var, we create a new  dataset, collecting human and ChatGPT generated texts to include these varieties. Overall, as shown in Table~\ref{tab:dataset},  the dataset contains 4 different types of language tasks, including news composing (news), review composing (review), essay writing (writing) and question answering (QA). Each task covers 1 to 4 different topics. In HC-Var, human texts are from different public datasets such as XSum, IMDb.\footnotemark.

\begin{table}[t]
\centering
\caption{Summary of HC-Var}
\vspace{-0.3cm}
\label{tab:dataset}
\resizebox{0.9\textwidth}{!}
{    
\begin{tabular}{c||ccc|cc|c|cccc}
\hline
Task & News & News & News & Review & Review & Writing & QA & QA & QA & QA\\
\hline
Topic &  World & Sports & Business & IMDb & Yelp & Essay & Finance & History & Medical & Science \\
ChatGPT Vol. & 4,500  & 4,500  & 4,500  & 4,500  & 4,500  & 4,500  & 4,500  & 4,500  & 4,500 & 4,500\\
Human Vol. & 10,000& 10,000& 9,096& 10,000& 10,000& 10,000& 10,000& 10,000& 10,000 & 10,000\\
Human Src. & XSum & XSum & XSum & IMDb & Yelp & IvyPanda & FiQA & Reddit & MedQuad & Reddit\\
\hline
\end{tabular}
}
\vspace{-0.3cm}
\end{table}

\footnotetext{We following existing datasets to take public available datasets as human texts.}

\textbf{Variety in Prompts \& Lengths.} In each task, we design 3 prompts to obtain ChatGPT outputs to ensure the variety of generated outputs and their lengths. For example, to ask ChatGPT to compose a review for a movie with title $<$MovieTitle$>$, we have the prompts:
\begin{itemize}[itemsep=0pt, leftmargin=*,topsep=0pt]
    \item \textbf{\textit{\blue{P1:}}} Write a review for $<$MovieTitle$>$  in [\gray{50, 100, 200}] words.
    \item \textbf{\textit{\blue{P2:}}} Develop an engaging and creative review for $<$MovieTitle$>$ in [\gray{50, 100, 200}] words. Follow the writing style of the movie comments as in popular movie review websites such as imdb.com. 
    \item \textbf{\textit{\blue{P3:}}} Complete the following: I just watched $<$MovieTitle$>$. It is [\gray{enjoyable, just OK, mediocre, unpleasant, great}]. [\gray{It is because that, The reason is that, I just feel that, ...}]. \footnote{Each word / phrase in the  gray list has the same chance to be randomly selected. In P3, the each generated text is randomly truncated to 50-200 tokens. }
\end{itemize}
The design of P3 will make ChatGPT texts look much more casual and conversational than P1 and P2 (see Appendix~\ref{app:dataset} for some examples). Notably, previous studies \citep{guo2023close, kabir2023answers} observe that ChatGPT texts are much more formal and official compared with human texts. However, our dataset includes the instances to employ ChatGPT to produce texts, which are casual and close to spoken language. This can greatly enriches the collection of ChatGPT generated outputs. 
Similarly, under ``QA'', given a question $<$Q$>$, we have the following prompts: 
\begin{itemize}[itemsep=0pt, leftmargin=*,topsep=0pt]
    \item \textbf{\textit{\blue{P1:}}} Answer the following question in [\gray{50, 100, 150}] words. $<$Q$>$ 
    \item \textbf{\textit{\blue{P2:}}} Act as you are a user in Reddit or Quora, answer the question in [{\small \gray{50,100,150}}] words. $<$Q$>$ 
    \item \textbf{\textit{\blue{P3:}}} Answer the following question in [\gray{50, 100, 150}] words. $<$Q$>$ Explain like I am five.
\end{itemize}
The P3 (which is also used in \citep{guo2023close}) also encourages the generated answers to be closer to spoken language. Besides, for tasks such as  essay writing and news writing where human texts are originally formal, we design various prompts by assigning different writing styles. For example, in essay writing, one of the prompt is ``Writing an article with following title like a high school student''. More details about the prompt design are in Appendix~\ref{app:dataset}.

\vspace{-0.2cm}
\subsection{In-distribution Evaluation}\label{sec:indomain}
\vspace{-0.2cm}


In this subsection, under our proposed dataset HC-Var, we verify that the training-based detection methods can indeed achieve advantageous detection performance under the
``in-distribution'' setting, when compared with other detection methods. This part of experiments is also consistent with previous experimental studies~\citep{guo2023close, chen2023gpt} which are conducted in other datasets. The extraordinary in-distribution performance motivates us to study its generalization behavior.

\textbf{Experimental Setup.} Generally, each experiment is focused on a specified language task, so the detection models are trained and tested on the texts from the same task. For example, under QA, we train the detection models on human and ChatGPT answered questions, and test whether they can distinguish these answers. Under each task, we randomly sample from the datasets to obtain class-balanced training, validation and test subsets (each has an equal number of human and ChatGPT samples). Thus, all training, validation and test datasets contain various topics, prompts and lengths, so distribution shift between training and test set is negligible, namely ``in-distribution'' evaluation. 

\textbf{Evaluation Metrics.} We evaluate the detection performance using different metrics: \textit{True Positive Rate (tpr)} shows the detector's power to identify ChatGPT generated texts, \textit{1 - False Positive Rate (1-fpr)} shows the detector's accuracy on human texts, \textit{F1 score} considers the \textit{tpr} and \textit{1-fpr} trade-off. All F1 score, tpr and fpr are calculated under a fixed decision threshold 0.5. We also include \textit{AUROC} which considers all possible thresholds for decision making. 

\textbf{Performance Comparison.} In Table~\ref{tab:indistribution}, we report the performance of trained classification models, which are based on model architectures RoBERTa-base, RoBERTa-large and T-5. We also include representative ``similarity-based'' methods DNA-GPT~\citep{yang2023dna} and GPT-PAT~\citep{chen2023gpt}, and ``score-based'' methods including GLTR~\citep{gehrmann2019gltr} and GPTZero~\citep{gptzero}. From the table, we can see that training-based methods outperform non-training based methods under the in-distribution evaluation. 
\vspace{-0.3cm}
\begin{table*}[h!]
\centering
\caption{\small In-distribution ChatGPT Detection Performance}
\label{tab:indistribution}
\vspace{-0.3cm}
\resizebox{0.95\textwidth}{!}
{
\begin{tabular}{c|| cccc | cccc | cccc | cccc  } 
\hline
\hline
& \multicolumn{4}{c|}{News} &  \multicolumn{4}{c|}{Review} & \multicolumn{4}{c|}{Writing} & \multicolumn{4}{c}{QA} \\
 & Auc & f1 & tpr & 1-fpr  & Auc& f1 & tpr & 1-fpr &  Auc & f1 & tpr & 1-fpr & Auc & f1 & tpr & 1-fpr \\
\hline
GPTZero & 0.99 & 0.94 & 1.00 & 0.94 & 0.99 & 0.90 & 0.82 & 1.00 & 0.98 & 0.89 & 0.97 & 0.90 & 0.95 & 0.90 & 0.98 & 0.91\\
GLTR & 0.94 & 0.87 & 0.88 & 0.86 & 0.90 & 0.82 & 0.85 & 0.80 & 0.99 & 0.95 & 0.94 & 0.98 & 0.88 & 0.81 & 0.78 & 0.82 \\
\hline
DNA-GPT & 0.92 & 0.90 &0.89 & 0.89 & 0.93 & 0.90 & 0.88 & 0.89  & 0.97 & 0.92 & 0.88 & 0.95 & 0.87 & 0.82 & 0.86 & 0.80 \\
GPT-PAT & 1.00 & 0.99 & 1.00 & 0.99 & 1.00 & 1.00 & 1.00 & 1.00 & 1.00 & 0.99 & 0.99 & 0.99 & 0.99 & 0.95 & 0.97 & 0.94 \\
\hline
RoBERTa-b & 1.00 & 1.00 & 1.00 & 1.00 & 1.00 & 0.99 & 0.99 & 1.00 & 1.00 & 1.00 & 1.00 & 1.00 & 1.00 & 0.98 & 0.99 & 0.98 \\
RoBERTa-l & 1.00 & 1.00 & 1.00 & 1.00 & 1.00 & 0.99 & 1.00 & 0.99 & 1.00 & 1.00 & 1.00 & 1.00 & 1.00 & 0.99 & 0.99 & 0.99 \\
T-5 & 1.00 & 1.00 & 0.99 & 0.99 & 1.00 & 0.99 & 0.99 & 0.99 & 1.00 & 1.00 & 0.99 & 1.00 & 1.00 & 0.98 & 0.98 & 0.96 \\
\hline\hline
\end{tabular}
}
\end{table*}
\vspace{-0.4cm}

The training-based methods present extraordinary ``in-distribution'' detection performance. This motivates us to have a further exploration on their generalization performance under out-of-distribution scenarios. In the following, we design experiments to analyze them when the training data cannot fully cover the distribution of test data. Our analysis contains two major scenarios. In Section~\ref{sec:pl}, we consider the scenario that the model trainer aims to detect the texts from their interested language tasks and topics. In this case, the possible distribution shifts can be due to the variation of prompts and lengths. In Section~\ref{sec:topic}, we discuss the cases that the models encounter unforeseen tasks or topics.

\vspace{-0.3cm}
\section{How Prompt \& Length Affect Detection Generalization}\label{sec:pl}
\vspace{-0.3cm}

\subsection{Generalization to Unseen Prompts}\label{sec:prompts}
\vspace{-0.3cm}

To detect ChatGPT texts from a certain language task with several interested topics, it is a realistic and practical scenario that the model trainer collects ChatGPT texts using certain prompts. However, they never know whether there are other unforeseen prompts used to obtain ChatGPT outputs during test.  Thus, we aim to analyze how the detection models can generalize to unseen prompts. In detail, refer to Figure~\ref{fig:prompt_generalize}, we conduct experiment to train the model for multiple trials (in each individual task with the topics in HC-Var). For each task at each time, we train the model on ChatGPT generated texts from one prompt, and test the model on each of three prompts (which we designed in Secion~\ref{sec:pre}) individually. Besides, for each time of training, the human texts are randomly sampled to match the number of generated texts. In Figure~\ref{fig:prompt_generalize}, we report the F1 score\footnotemark of the trained classifiers. Notably, for these trained models, they have similar (close to 100\%) accuracy on human texts (see Appendix~\ref{app:not}). Therefore, these F1-scores are majorly determined by their True Positive Rate, which measure their ability to correctly recognize ChatGPT texts.
\footnotetext{We report F1 score instead of AUROC, as AUROC considers all thresholds for decision making, which is impractical under unseen distribution shift. All experiments are conducted by 5 times, the average is reported.}


\begin{figure}[t]
\vspace{-0.5cm}
\subfloat[\footnotesize News]
{\label{fig:ad2}
\begin{minipage}[b]{0.24\linewidth}
\centering
\includegraphics[width=1.1\textwidth]{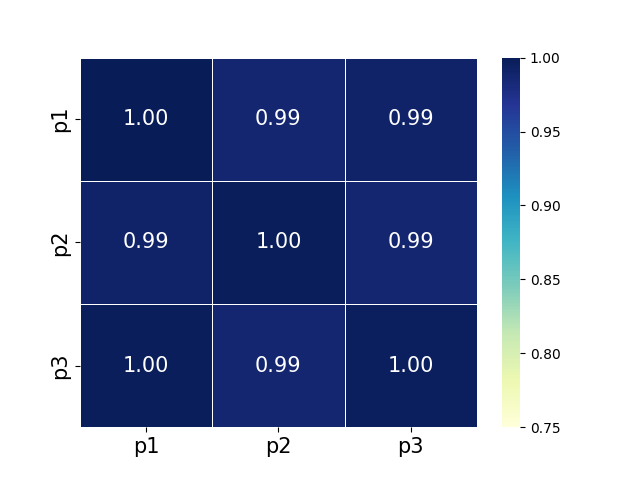}
\end{minipage}
\vspace{-0.2cm}
}
\subfloat[\footnotesize Review]{\label{fig:ad1}
\begin{minipage}[b]{0.24\linewidth}
\centering
\includegraphics[width=1.1\textwidth]{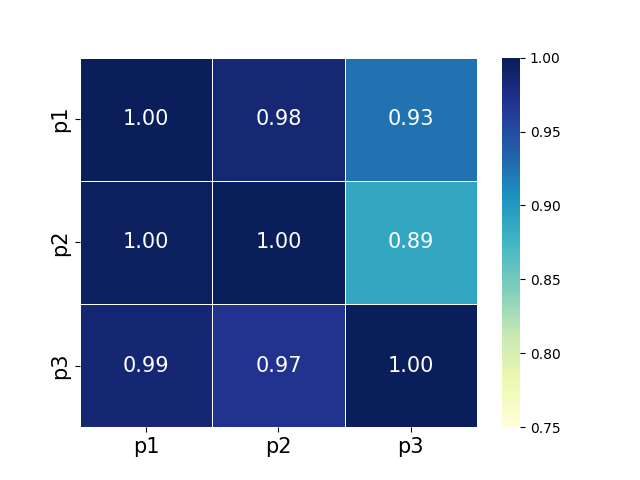}
\end{minipage}
\vspace{-0.2cm}
}
\subfloat[\footnotesize Writing]
{\label{fig:ad2}
\begin{minipage}[b]{0.24\linewidth}
\centering
\includegraphics[width=1.1\textwidth]{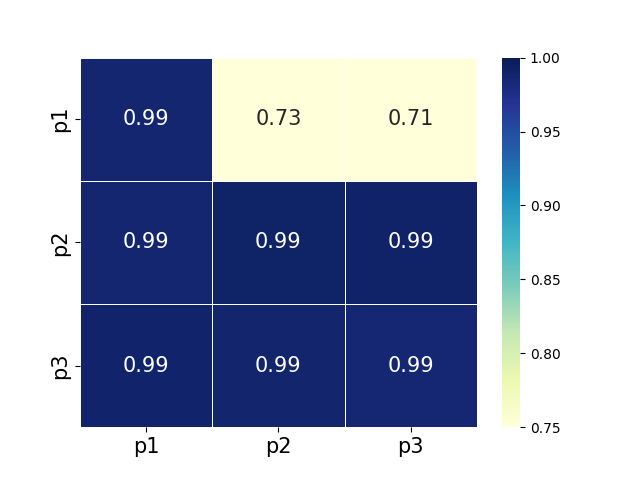}
\end{minipage}
\vspace{-0.2cm}
}
\subfloat[\footnotesize QA]
{\label{fig:qqqa}
\begin{minipage}[b]{0.24\linewidth}
\centering
\includegraphics[width=1.1\textwidth]{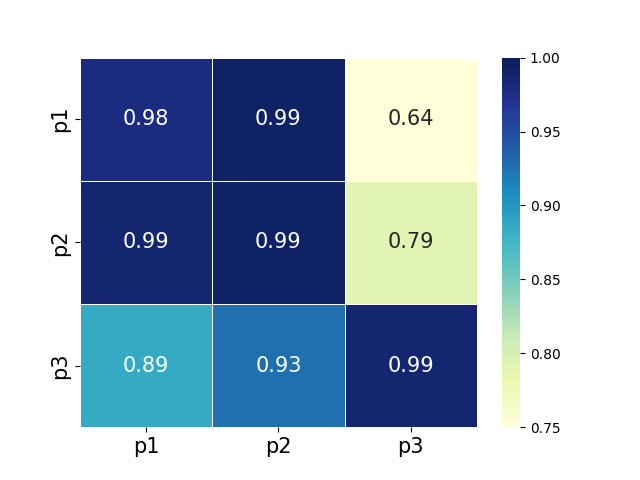}
\end{minipage}
\vspace{-0.2cm}
}
\vspace{-0.2cm}
\caption{\small Generalization of RoBERTa-base models among various prompts. Note that each row denotes the prompt during training and each column is the test prompt. F-1 score is reported by color score.}
\label{fig:prompt_generalize}
\end{figure}
 In this section, we study the ChatGPT detection generalization in terms of prompts and lengths under the same topic and domain. Note that we only report the result of a representative model, RoBERTa-base, and results for other models such as RoBERTa-large and T5 are in Appendix~\ref{app:exp}.

{\bf Observations.} From Figure~\ref{fig:prompt_generalize}, we can observe a great disparity among models that trained and tested on different prompts. For example, under QA, the model trained on P1 or P2 has low F1 scores 0.64 and 0.79 on P3 respectively. While, the model trained on P3 has a better generalization, with F1 score 0.89 and 0.93 on P1 and P2 respectively.  Thus, a natural question raises: \textit{\textbf{Why does such  disparity happen?}} Next, we unveil two potential reasons.  


\textbf{{Reason 1. Prompt Similarity:}} Intuitively, the generalization performance can be highly dependent on the ``similarity'' between the generated texts from two different prompts. In other words, if ChatGPT responds to two prompts in a similar way, it is very likely that the models trained on one prompt can also correctly recognize the texts from the other. Therefore, for two given prompts $P_i$ and $P_j$ (in the same task),  we propose the concept of ``\textbf{prompt similarity}'', denoted as $\mathcal{S}(\mathcal{D}_C^{P_i}, \mathcal{D}_C^{P_j})$, which refers to the similarity between the generated texts $\mathcal{D}_C^{P_i}$ and $\mathcal{D}_C^{P_j}$ from prompts $P_i$ and $P_j$.
In this work, we calculate this similarity using MAUVE~\citep{pillutla2021mauve}, which is a well-known similarity metric for text distribution, and we report every  $\mathcal{S}(\mathcal{D}_C^{P_i}, \mathcal{D}_C^{P_j})$ in Figure~\ref{fig:prompt_similarity}. In Figure~\ref{fig:prompt_vis}, we also visualize the texts from various prompts in the pen-ultimate layer of a pre-trained RoBERTa model. From figures, we can see that the ``prompt similarity'' has a great impact on generalization. Take QA as an example, the generated texts from P1 and P2 has a high MAUVE similarity 0.97, the representations of texts from P1 and P2 are also correspondingly close to each other. Meanwhile, in Figure~\ref{fig:qqqa}, the generalization between P1 and P2 is also high, showing that trained models on similar prompts can well generalize to each other.

\begin{figure}[t]
\hspace{0.2cm}
\subfloat[\footnotesize News]
{\label{fig:ad2}
\begin{minipage}[b]{0.23\linewidth}
\centering
\includegraphics[width=1\textwidth]{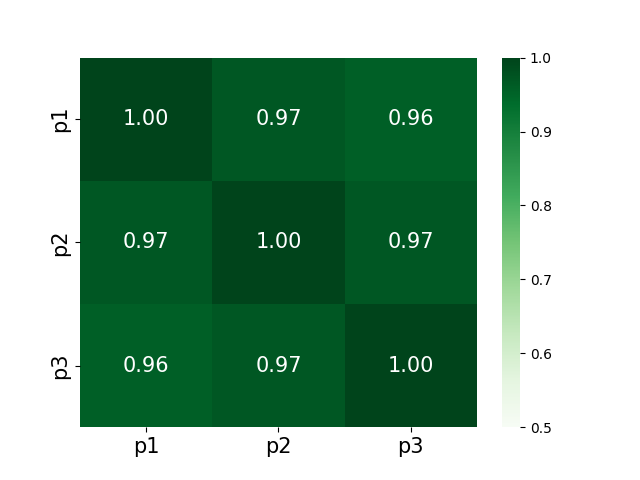}
\end{minipage}
\vspace{-0.2cm}
}
\subfloat[\footnotesize Review]{\label{fig:ad1}
\begin{minipage}[b]{0.23\linewidth}
\centering
\includegraphics[width=1\textwidth]{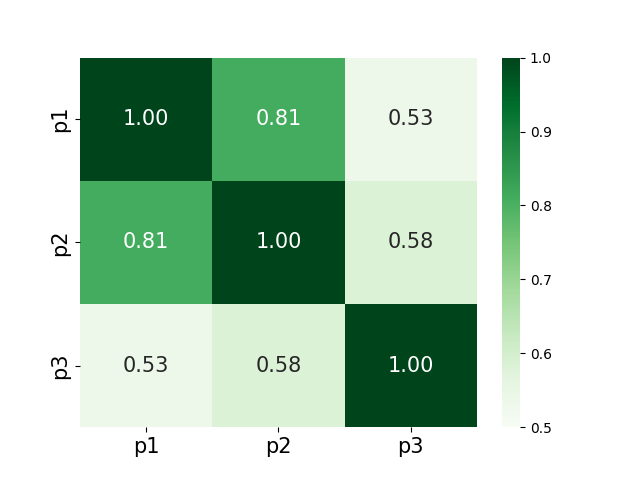}
\end{minipage}
\vspace{-0.2cm}
}
\subfloat[\footnotesize Writing]
{\label{fig:ad2}
\begin{minipage}[b]{0.23\linewidth}
\centering
\includegraphics[width=1\textwidth]{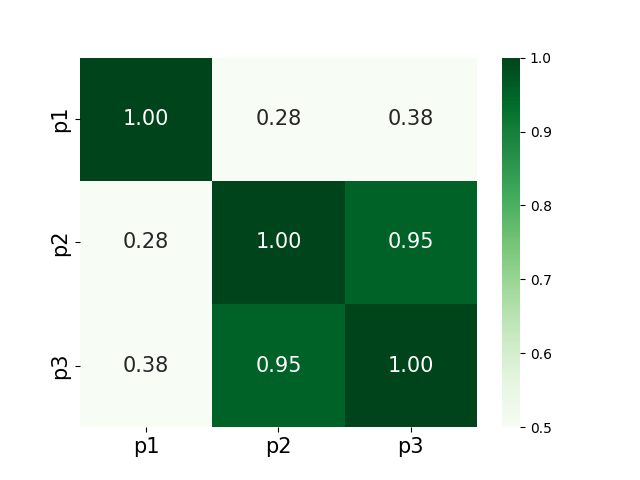}
\end{minipage}
\vspace{-0.2cm}
}
\subfloat[\footnotesize QA]
{\label{fig:ad3}
\begin{minipage}[b]{0.23\linewidth}
\centering
\includegraphics[width=\textwidth]{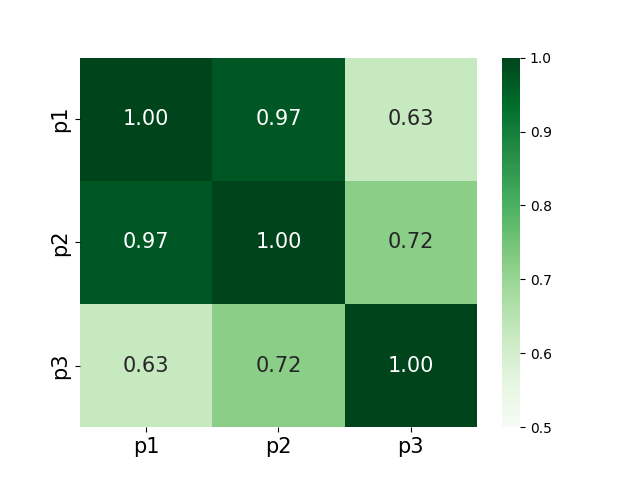}
\end{minipage}
\vspace{-0.2cm}
}
\vspace{-0.3cm}
\caption{\small Prompt Similarity between ChatGPT Texts among Different Prompts}
\label{fig:prompt_similarity}
\centering
\subfloat[\footnotesize News]
{\label{fig:ad2}
\begin{minipage}[b]{0.23\linewidth}
\centering
\includegraphics[width=1\textwidth]{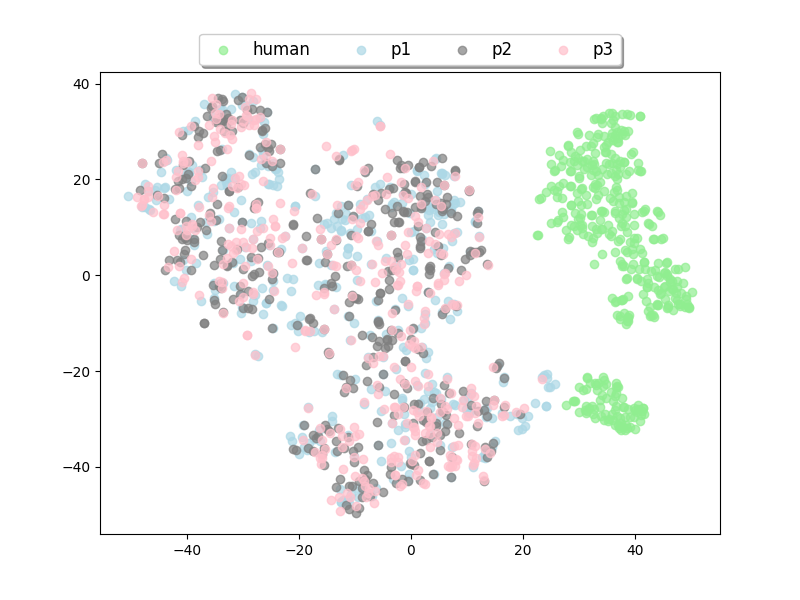}
\end{minipage}
\vspace{-0.2cm}
}
\subfloat[\footnotesize Review]{\label{fig:ad1}
\begin{minipage}[b]{0.23\linewidth}
\centering
\includegraphics[width=1\textwidth]{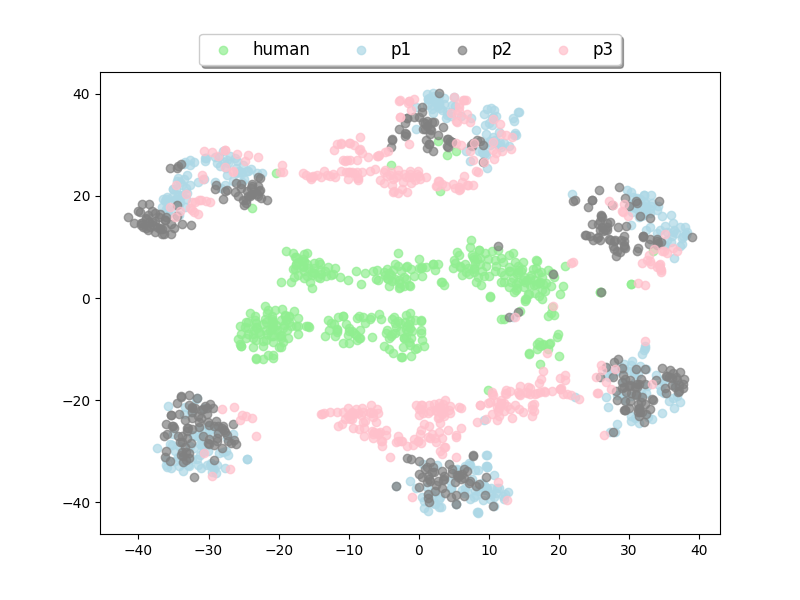}
\end{minipage}
\vspace{-0.2cm}
}
\subfloat[\footnotesize Writing]
{\label{fig:addd2}
\begin{minipage}[b]{0.23\linewidth}
\centering
\includegraphics[width=1\textwidth]{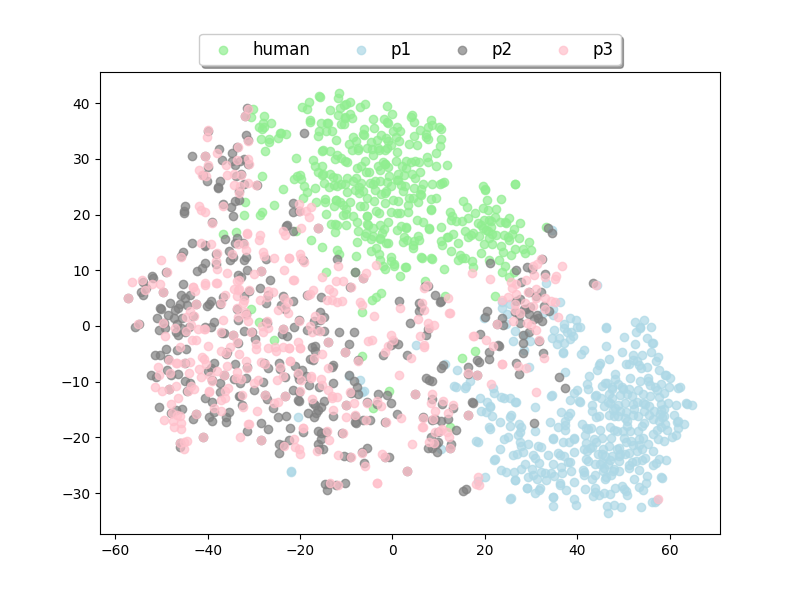}
\end{minipage}
\vspace{-0.2cm}
}
\subfloat[\footnotesize QA]
{\label{fig:addd3}
\begin{minipage}[b]{0.23\linewidth}
\centering
\includegraphics[width=1\textwidth]{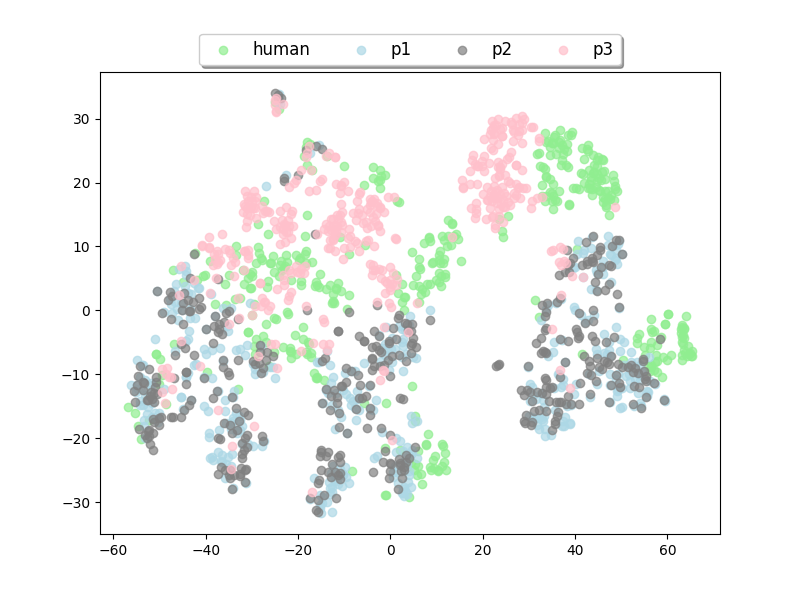}
\end{minipage}
\vspace{-0.2cm}
}
\vspace{-0.3cm}
\caption{\small Representation Visualization of Human and ChatGPT Texts from Various Prompts}
\label{fig:prompt_vis}
\subfloat[\footnotesize HC~Align.]
{
\label{fig:41}
\begin{minipage}[t]{0.18\linewidth}
\centering
\vspace{-1pt}
\includegraphics[width=1\textwidth]{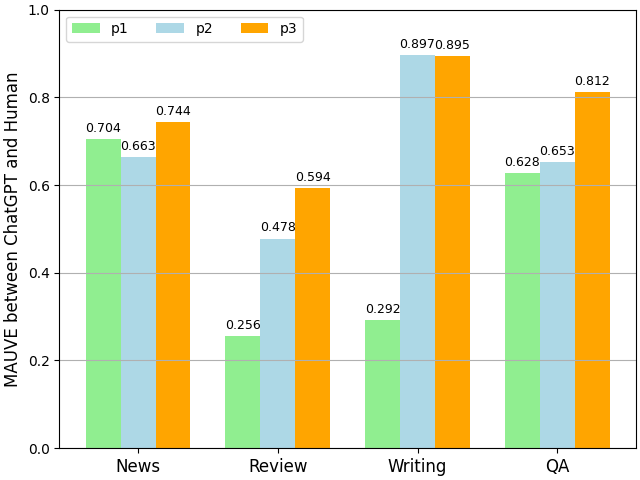}
\end{minipage}
\vspace{-0.2cm}
}
\subfloat[\footnotesize News]{\label{fig:42}
\begin{minipage}[b]{0.19\linewidth}
\centering
\includegraphics[width=1\textwidth]{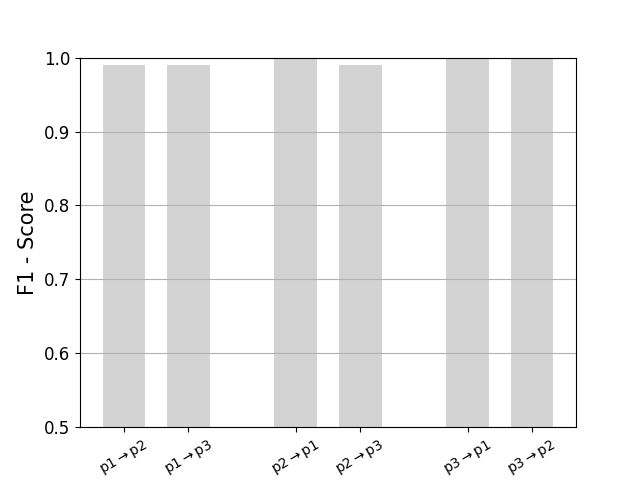}
\end{minipage}
\vspace{-0.2cm}
}
\subfloat[\footnotesize Review]{\label{fig:43}
\begin{minipage}[b]{0.19\linewidth}
\centering
\includegraphics[width=1\textwidth]{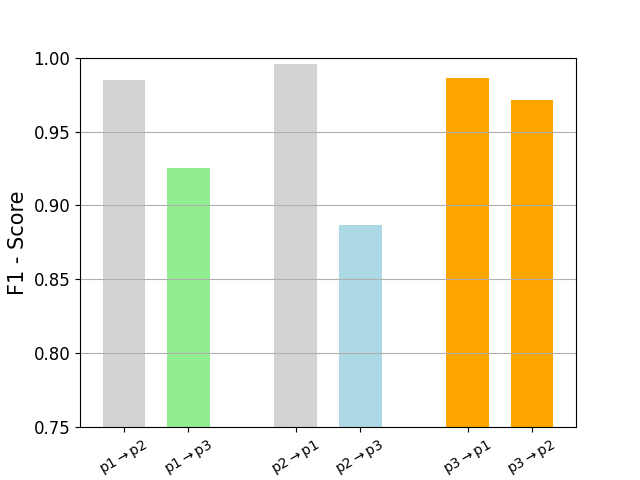}
\end{minipage}
\vspace{-0.2cm}
}
\subfloat[\footnotesize Writing]
{\label{fig:44}
\begin{minipage}[b]{0.19\linewidth}
\centering
\includegraphics[width=1\textwidth]{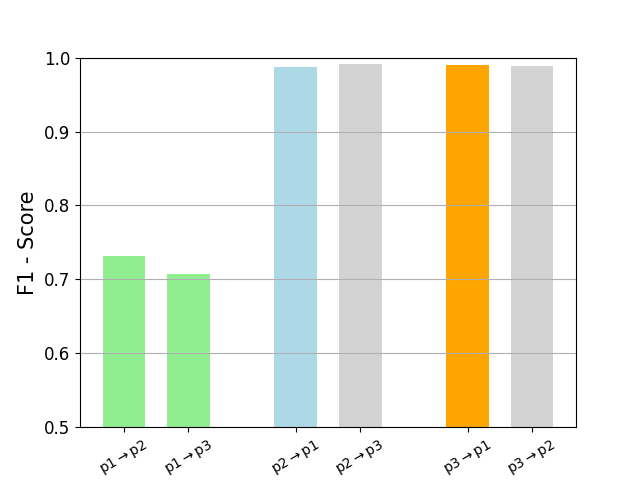}
\end{minipage}
\vspace{-0.2cm}
}
\subfloat[\footnotesize QA]
{\label{fig:45}
\begin{minipage}[b]{0.19\linewidth}
\centering
\includegraphics[width=1\textwidth]{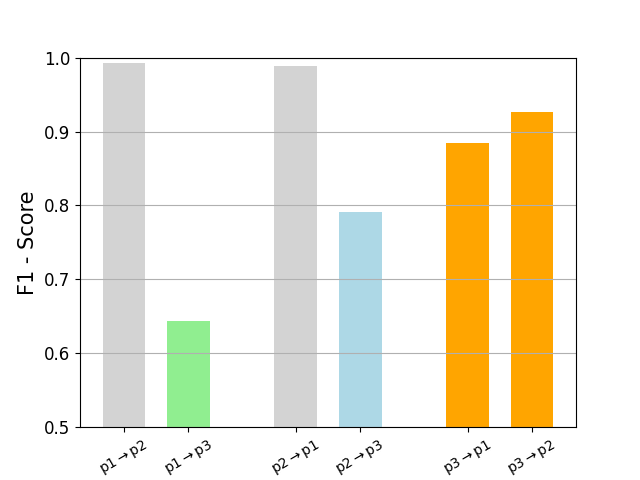}
\end{minipage}
\vspace{-0.2cm}
}
\vspace{-0.2cm}
\caption{\small HC-Alignment for different prompts and Generalization.}
\label{fig:human_chatgpt_alignment}
\end{figure}

\textbf{{Reason 2. Human-ChatGPT Alignment:}}
A more interesting study is about generalization between dissimilar prompts. In each task, there are cases where the training and test prompts are not similar but have a good generalization. For example, in review, P1 and P3 are not similar but the model trained on P3 has a high F1 score 0.99 on P1. It suggests that there are other reasons beyond prompt similarity that also affect the generalization performance. In this work, we find: for the training datasets which contain ChatGPT outputs closer to human written texts, the trained model has better generalization. We called this property as the ``\textbf{Human-ChatGPT (HC) alignment}'', which refers to the similarity between $\mathcal{D}^{P_i}_C$ and $\mathcal{D}_H$, and denoted as $\mathcal{S}(\mathcal{D}_C^{P_i}, \mathcal{D}_H)$. In Figure~\ref{fig:41}, for each task, we measure HC-alignment for each prompt $P_i$, also using the MAUVE similarity. 
In Figure 4 (b)-(e), we re-organize the result in Figure~\ref{fig:prompt_generalize} using bar plots to show the F1 score of the model trained and tested on each prompt. From the result, we note that the prompts with high ``HC Alignment'' have better generalization to other prompts. For prompts with low HC-Alignment, they have poorer generalization to other prompts unless they are tested on the prompts with high ``prompt similarity'' (which we give them a gray color in Figure~\ref{fig:human_chatgpt_alignment} (b)-(e)). Interestingly, the calculated HC-alignment also reflects our idea during prompt designing in data collection phase. Refer to Section~\ref{sec:dataset}, in ``review'' and ``QA'', P3 is designed to guide the ChatGPT generate texts more conversational in QA and review. From Figure~\ref{fig:41}, the HC alignment of P3 is also the highest.

{\bf Insights.} In practice, it is a realistic and reasonable setting to consider multiple and diverse prompts (which we don't include in this discussion, since we only calculate HC-Alignment for each individual prompt). However, our studies draw key insights to bring cautions to the data collection during model training, which is the pitfall of only collecting samples far away from human data.
To explain the impact of HC-Alignment on generalization, in Section~\ref{sec:theory}, we construct a theoretical analysis to provide deeper understanding. In our discussion, we majorly claim that there can be two types of factors contributing the HC Alignment. Specifically, for the ChatGPT data $\mathcal{D}^{P_i}_C$ and human data $\mathcal{D}_H$, they can differ in ``\textbf{\textit{ChatGPT Direction}}'' and ``\textbf{\textit{Irrelevant Direction}}'' (see Section~\ref{sec:theory} for more details). A larger difference in irrelevant direction can cause the ChatGPT generated texts have a lower HC-Alignment with human texts. Meanwhile, the detection models trained on datasets with low HC-alignment are likely to overfit to this irrelevant direction and suffer from poor generalization.
In the next subsection, we provide an example study, to show that the \textbf{length of the texts} can be one possible  ``irrelevant direction'' which affects generalization of the models.
\vspace{-0.3cm}
\subsection{Generalization to Length Shift}\label{sec:len}
\vspace{-0.3cm}

Recall that in Section~\ref{sec:pre}, when we design prompts to inquire ChatGPT outputs, we explicitly control the lengths of the generated texts. In this subsection, we show the impact of lengths on the model's generalization. To have an overview on the length distribution of human and ChatGPT texts, in Figure~\ref{fig:length_distribution}, we plot the density of human texts and ChatGPT texts in HC-Var in one language task ``review''. Additionally, we include ChatGPT\# to show the length distribution if we do not designate the lengths in the inquiries (i.e., by removing ``in [\gray{50, 100, 200}] words'' in the prompts).  
From the Figure~\ref{fig:length_distribution}, we can see the generated texts from ChatGPT\#  are much longer compared to human texts. Notably, previous studies~\citep{guo2023close} also find ChatGPT texts are longer than human in their collected QA dataset, HC-3. This suggests the length can be a commonly overlooked factor in previous studies during data collection. (See Appendix~\ref{app:len} for length comparison in other tasks.)

In our study, we find this difference in length will make a noticeable impact on the trained model's performance. For example, in Figure~\ref{fig:len1}, we report the performance (TPR, 1-FPR) of the model trained on our dataset when it is tested on samples with various lengths. In Figure~\ref{fig:len2}, we conduct the same experiment, by replacing the ChatGPT texts in training set to ChatGPT\# (without length designation).
From the result, we can see the second model struggles on classifying short ChatGPT texts. In other words, the second model tends to predict short ChatGPT texts as human written. A likely reason is that this model is trained to heavily rely on the lengths of the texts for prediction. If a candidate text is short, the model will predict it as human-written. 
However, text lengths should be an ``irrelevant feature'' for detection, as ChatGPT can generate shorter or longer texts. In Figure~\ref{fig:len1}, this issue can be greatly alleviated under our dataset. It may be bcause our collected dataset HC-Var has a much slighter length difference between human and ChatGPT texts (see Figure~\ref{fig:length_distribution}). 
This finding encourages us to collect ChatGPT texts to have similar lengths with human texts for training the detection models. It also demonstrates the pitfall if only collecting ChatGPT outputs that are very distinct from human texts. This conclusion echoes back to the discussions in Section~\ref{sec:prompts}. 

\vspace{-0.4cm}
\begin{figure}[h]
\centering
\subfloat[\footnotesize Length Distribution.]{\label{fig:length_distribution}
\begin{minipage}[b]{0.24\linewidth}
\centering
\includegraphics[width=1.\textwidth]{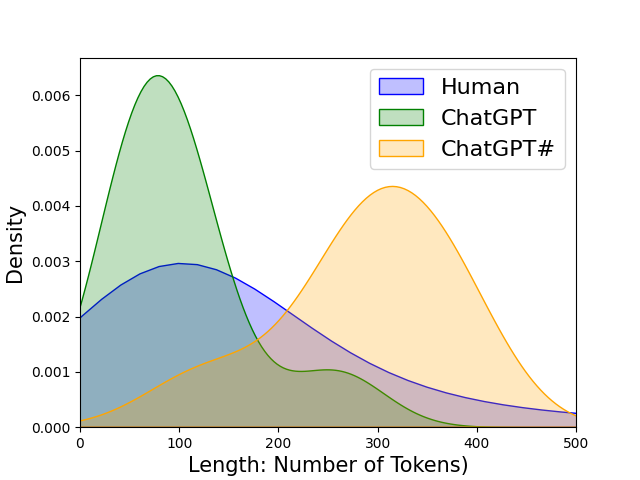}
\end{minipage}
}
\hspace{0.4cm}
\subfloat[\footnotesize Under HC-Var.]{\label{fig:len1}
\begin{minipage}[b]{0.24\linewidth}
\centering
\includegraphics[width=1.\textwidth]{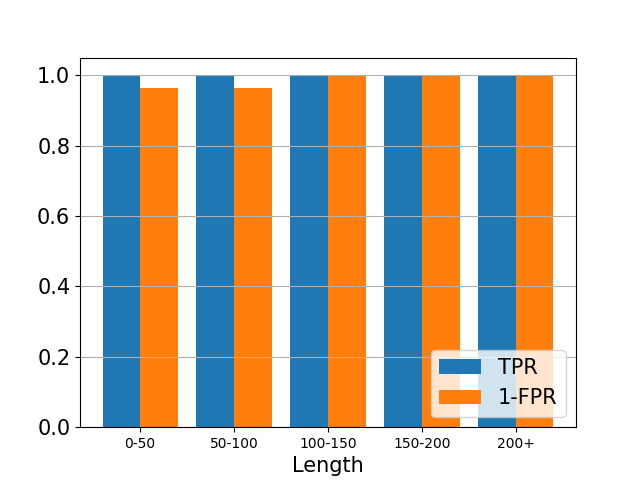}
\end{minipage}
}
\hspace{0.4cm}
\subfloat[\footnotesize No Length Designation.]{\label{fig:len2}
\begin{minipage}[b]{0.24\linewidth}
\centering
\includegraphics[width=1.\textwidth]{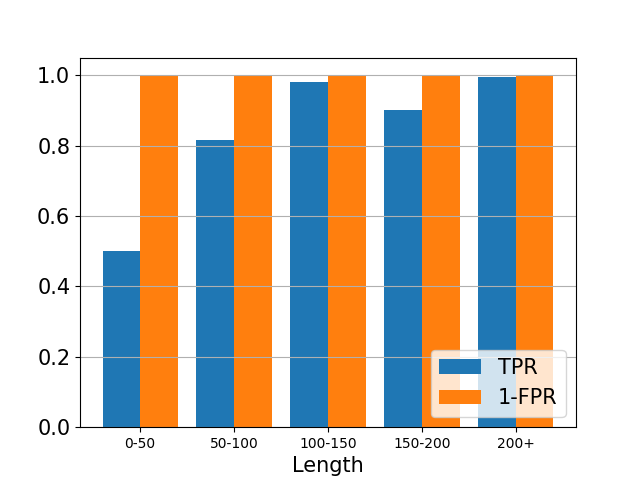}
\end{minipage}
}
\vspace{-0.3cm}
\caption{\small Impact of Lengths on ChatGPT Detection}
\label{fig:rare_benefit}
\vspace{-0.5cm}
\end{figure}

\subsection{Theoretical Analysis}\label{sec:theory}
\vspace{-0.3cm}

In this section, we construct a theoretical model to understand our previous empirical results. We aim to show when the ChatGPT texts and human texts are not well-aligned, it is likely that the model has a poor generalization. To build the connection, our major argument is that the models tend to focus on ``\textit{\textbf{irrelevant directions}}'' for detection when this alignment is low. In our study, we use a two dimensional toy model with data samples from Gaussian distributions to illustrate our idea. 

\begin{wrapfigure}{placement}{0.34\linewidth}
    \centering
    \vspace{-0.3cm}
    \includegraphics[width=\linewidth]{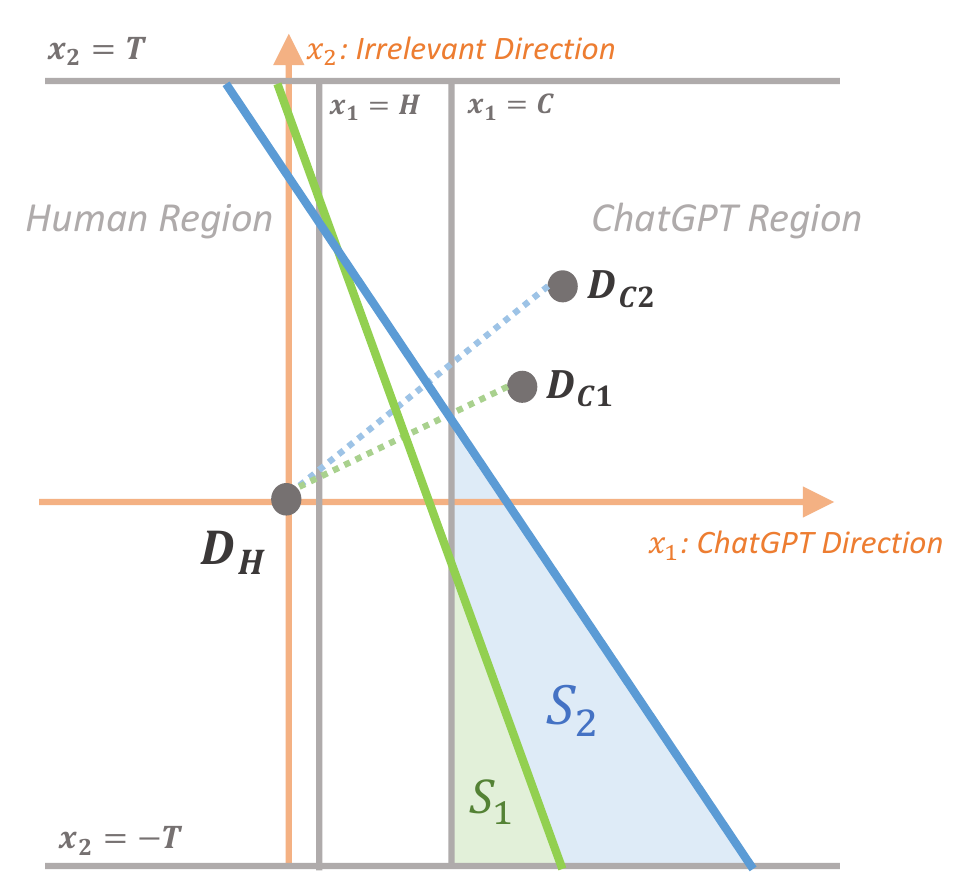}
    \vspace{-0.7cm}
    \caption{\footnotesize Theorem illustration.}
    \label{fig:theory}
    \vspace{-0.4cm}
\end{wrapfigure}
\textbf{{Theoretical setup.}} We consider a simplified scenario that human texts and ChatGPT texts are lying in a two dimensional data space. As illustrated in Figure~\ref{fig:theory}, we define the $x_1$-axis refers to ``\textit{ChatGPT Direction}'', which includes principal features to decide whether a sample belongs to human or ChatGPT. For simplicity, we define the region to the right of line $x_1=C (C>0)$ as ChatGPT generated, and we define the left of $x_1=H (H>0)$ as human written. Orthogonal to the ChatGPT direction, we define the $x_2$-axis as ``\textit{Irrelevant Direction}''. This direction contains features that are irrelevant for ChatGPT detection. Previous discussion in Section~\ref{sec:len} demonstrates that the length of the texts can be one source of irrelevant features. 

Under this data space, we define the human training data are sampled following a Gaussian distribution $\mathcal{D}_H = \gN(0, \sigma^2 I)$. For ChatGPT data, we also assume that they are sampled following a Gaussian distribution in the space $x_1\geq C$. Recall the previous empirical studies, we find that using different prompts can generate texts with different HC-Alignment. In our analysis, we aim to compare two data collection strategies, with different distances to human data (a.k.a HC Alignment in empirical studies). In detail, we compare the strategies to make samplings from $\mathcal{D}_{C1}$ and $\mathcal{D}_{C2}$:
\begin{spreadlines}{0.8em}
\begin{align}\small
\label{eq:data_dist1}
\begin{cases}
\mathcal{D}_{C1} = \mathcal{N}~(\theta_1, \sigma^2I), ~~~~ ||\theta_1||_2 = d,\\
\mathcal{D}_{C2} = \mathcal{N}~(\theta_2, \sigma^2I), ~~~~||\theta_2||_2 = K \cdot d,\\
\end{cases} 
~~d \geq C, K >1 
\end{align}
\end{spreadlines}
The key difference between the two data distributions is the existence of the term $K$, which decides their distance to the human data. For the centers $\theta_1$ and $\theta_2$, they are uniformly distributed in the ChatGPT region, as long as they have distances $d$ and $K\cdot d$ to the origin.
Next, we will study the generalization performance for binary classification models trained on human and ChatGPT texts. Before that, we first define a necessary evaluation metric of model generalization.
\begin{definition}[\textit{False Negative Area}]
For a given model $f$, it could  make errors in ChatGPT region under area surrounded by $f$, $x_1=C$ and $x_2=\pm T$, where $T>0$ is a threshold value controlling the limitation of $x_2$. We define the \textbf{False Negative Area (FNA)} as the area of the enclosed region.
\end{definition}
As an illustration in Figure~\ref{fig:theory}, $S_1$ and $S_2$ represent the corresponding FNA of $f_1$ and $f_2$, respectively. In our analysis, we denote the FNA of a model $f$ as $\Gamma(f)$. We use it to measure the models' error rate on unforeseen ChatGPT generated data, which are not covered by the collected training data.
Next, we formally state our main theory by analyzing the FNA of the models $f_1$ and $f_2$:
\begin{theorem}\label{theory}
    Given the human training data $\mathcal{D}_H$, ChatGPT training data $\mathcal{D}_{C1}$, $\mathcal{D}_{C2}$. For two classifiers $f_1$ and $f_2$ which are trained to minimize the error under a class-balanced dataset:
    \begin{align*}\small
    f_i &= \argmin_f \text{Pr.}(f(x)\neq y), ~~\text{where}~\begin{cases}
      x\sim\mathcal{D}_{Ci},  ~\text{if}~~ y=1\\
     x\sim\mathcal{D}_{H}, ~\text{if}~~ y=0 \\
    \end{cases} 
    \end{align*}
Suppose the maximal FNA that $f_1$ can achieve is denoted as $\sup \Gamma(f_1)$. Then, with probability at least $\Big(1 - \big(\frac{\pi}{2} - \frac{C}{d} + \Omega(\frac{C}{d})^3\big) / \big(\frac{\pi}{2} - \frac{C}{K d}\big)\Big)$, we have the relation:
\begin{equation}\label{eq:ratio}\small
    \Big(\frac{\Gamma(f_2)}{\sup\Gamma(f_1)}\Big)^2 \geq \Big(1 + (K-1) \cdot \frac{1}{1+2T\cdot \Omega(1/d)}\Big)>1.
\end{equation}
\end{theorem}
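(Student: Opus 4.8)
The plan is to characterize the optimal classifiers $f_1, f_2$ geometrically, compute their False Negative Areas explicitly, and then bound the ratio. First I would observe that because both $\mathcal{D}_H$ and $\mathcal{D}_{Ci}$ are isotropic Gaussians with equal variance $\sigma^2$, the Bayes-optimal decision boundary minimizing $\text{Pr}(f(x)\neq y)$ under a class-balanced mixture is the perpendicular bisector hyperplane (here, a line in $\mathbb{R}^2$) of the segment joining the origin $0$ and the center $\theta_i$. So $f_i$ is a line through the midpoint $\theta_i/2$ with normal direction $\theta_i/\|\theta_i\|$. The key random ingredient is the \emph{angle} $\phi_i$ that $\theta_i$ makes with the $x_1$-axis: since $\theta_i$ is uniform on the arc of radius $\|\theta_i\|$ lying in the ChatGPT region $\{x_1 \ge C\}$, the angle $\phi_i$ is uniform on $[-\arccos(C/\|\theta_i\|), \arccos(C/\|\theta_i\|)]$. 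For $\|\theta_1\| = d$ this is $[-\arccos(C/d), \arccos(C/d)]$ and for $\|\theta_2\| = Kd$ it is the wider interval $[-\arccos(C/(Kd)), \arccos(C/(Kd))]$.

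Next I would set up the FNA as a function of $\phi_i$ and $\|\theta_i\|$. The FNA is the area of the region bounded by the classifier line $f_i$, the line $x_1 = C$, and the strip $|x_2| \le T$ — geometrically a trapezoid (or triangle, when clipped) whose size grows with how far the decision line sits inside the ChatGPT region and with its tilt. The worst case for $f_1$, i.e. $\sup\Gamma(f_1)$, is attained at an extreme angle $\phi_1 = \pm\arccos(C/d)$ (the boundary line is then maximally tilted/displaced); I would compute this supremum in closed form as a function of $C, d, T$. For $f_2$, I want a \emph{lower} bound on $\Gamma(f_2)$ that holds with the claimed probability. The event I would condition on is that $\phi_2$ lands in a sub-arc forcing $\theta_2$ far enough from the $x_1$-axis that the midpoint $\theta_2/2$ is pushed deep into the region; quantitatively, the "bad" arc for $f_2$ has angular measure at least that of the full arc for $f_1$, and the probability that $\phi_2$ avoids it is $\left(1 - \frac{2\arccos(C/d)}{2\arccos(C/(Kd))}\right)$, which after the Taylor expansion $\arccos(z) = \frac{\pi}{2} - z - \frac{1}{6}z^3 - \cdots$ becomes exactly the stated $1 - \big(\frac{\pi}{2} - \frac{C}{d} + \Omega(C/d)^3\big)/\big(\frac{\pi}{2} - \frac{C}{Kd}\big)$.

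On that event, I would compare $\Gamma(f_2)$ against $\sup\Gamma(f_1)$. The dominant scaling is that the distance from the decision line to $x_1 = C$ scales like $\|\theta_i\|/2$ up to the cosine factor, so the "base" of the trapezoid for $f_2$ is roughly $K$ times that for $f_1$; the tilt contributes a correction of order $1/d$ through the $T$-clipping, yielding $\Gamma(f_2)/\sup\Gamma(f_1) \gtrsim 1 + (K-1)\cdot\frac{1}{1 + 2T\cdot\Omega(1/d)}$ after squaring. I would carry the trapezoid-area algebra carefully to land precisely on the $1 + 2T\cdot\Omega(1/d)$ denominator: the $2T$ is the vertical extent of the strip and the $\Omega(1/d)$ is the per-unit tilt of the bisector line when $\phi$ is near its extreme. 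Squaring both sides is harmless since both quantities are positive.

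The main obstacle I anticipate is making the FNA computation clean enough that the extremal angle analysis and the trapezoid clipping interact transparently — in particular, handling the regime where the classifier line exits the strip $|x_2|\le T$ before reaching $x_1=C$ (triangle case) versus where it does not (full trapezoid case), since $\sup\Gamma(f_1)$ and the conditioned $\Gamma(f_2)$ may fall in different regimes. I would manage this by choosing $T$ (or restricting to the stated parameter range $d \ge C$, $K>1$) so that the comparison is made within a single regime, or by showing the trapezoid formula lower-bounds the triangle one so the inequality only gets stronger. The probabilistic part is then routine once the uniform-angle model is pinned down; the geometric bookkeeping is where care is needed.
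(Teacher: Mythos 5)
Your proposal follows essentially the same route as the paper's proof: identify the Bayes-optimal classifier as the perpendicular bisector of the segment from the origin to $\theta_i$, show $\sup\Gamma(f_1)$ is attained at the extreme angle $h=C$ by monotonicity of the enclosed area in the center's $x_1$-coordinate, condition on $\phi_2$ lying outside the central sub-arc of angular half-width $\arccos(C/d)$ (giving exactly the stated probability after the $\arccos$ Taylor expansion), and on that event lower-bound $\Gamma(f_2)$ by the boundary case $\theta_2=K\theta_1$ to obtain the ratio $1+(K-1)/(1+2T/d)$. The only remaining work is the explicit area algebra, which you have correctly set up, so no further comment is needed.
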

The proof is deferred to Appendix~\ref{app:proof}. 
This theorem suggests that the FNA of $f_2$ is likely to be larger than the worst case of $f_2$ (with a moderate probability), since their FNA ratio is larger than 1. Moreover, both the probability term and the FNA ratio term (Eq.\ref{eq:ratio}) are monotonically increasing with the term $K$. It suggests the larger $K$ it is, 
the higher chance of $f_2$ can have a poorer generalization than $f_1$. Refer to Figure~\ref{fig:theory}, compared with $f_1$, the model $f_2$ has a larger FNA, because its decision boundary has a smaller slope, which means $f_2$'s prediction is more relied on the irrelevant direction.

\vspace{-0.3cm}
\section{Generalization of ChatGPT Detection across Topics \& Domains}\label{sec:topic}
\vspace{-0.3cm}

In this section, we discuss the circumstances that the models can face texts from unforeseen language tasks or topics. Under this setting, we find the trained models can also extract useful features to help the generalization to other unforeseen tasks or topics, which we called ``\textbf{transferable features}''. We also validate one frequently applied strategy, transfer learning~\citep{pan2009survey}, can be benefited from this property. Notably. in this section, we only provide the results for task-level generalization, and we leave the topic-level study in Appendix~\ref{app:exp}, where we can draw similar conclusions.

\vspace{-0.3cm}
\begin{figure}[h]
\centering
\subfloat[\footnotesize F1 Score]{\label{fig:ad1}
\begin{minipage}[b]{0.26\linewidth}
\centering
\includegraphics[width=1.\textwidth]{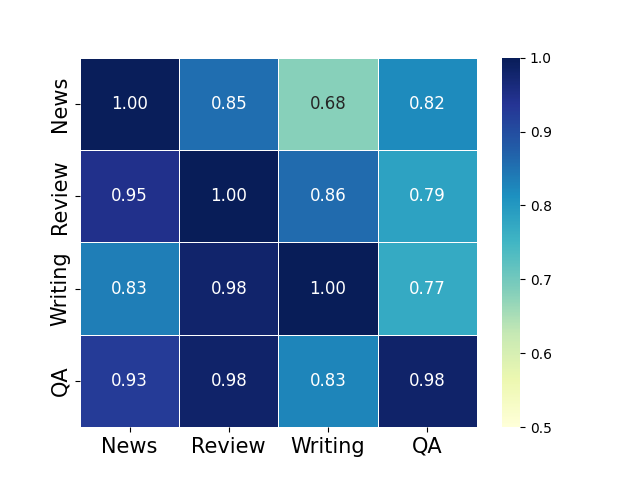}
\end{minipage}
\vspace{-0.2cm}
}
\subfloat[\footnotesize TPR]
{\label{fig:ad2}
\begin{minipage}[b]{0.26\linewidth}
\centering
\includegraphics[width=1.\textwidth]{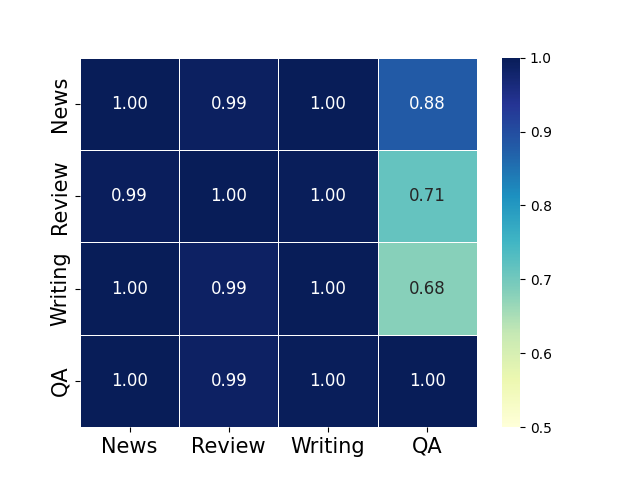}
\end{minipage}
\vspace{-0.2cm}
}
\subfloat[\footnotesize 1 - FPR]
{\label{fig:ad3}
\begin{minipage}[b]{0.26\linewidth}
\centering
\includegraphics[width=1.\textwidth]{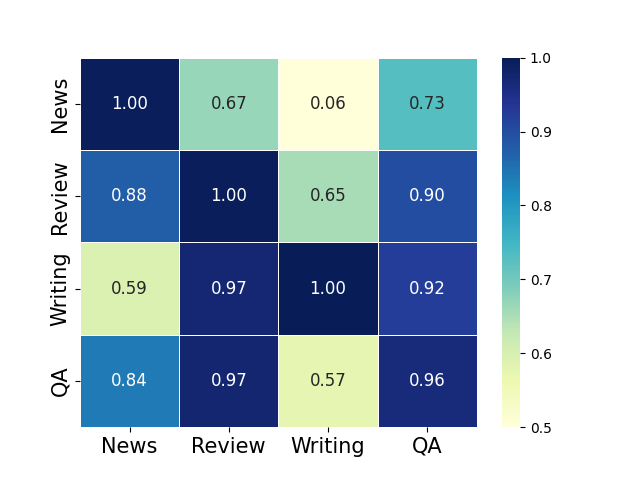}
\end{minipage}
\vspace{-0.2cm}
}
\vspace{-0.3cm}
\caption{\small Generalization of RoBERTa-base model across Various Language Tasks}
\label{fig:task_generalize}
\vspace{0.2cm}
\subfloat[\footnotesize News]
{\label{fig:ad2}
\begin{minipage}[b]{0.24\linewidth}
\centering
\includegraphics[width=1.\textwidth]{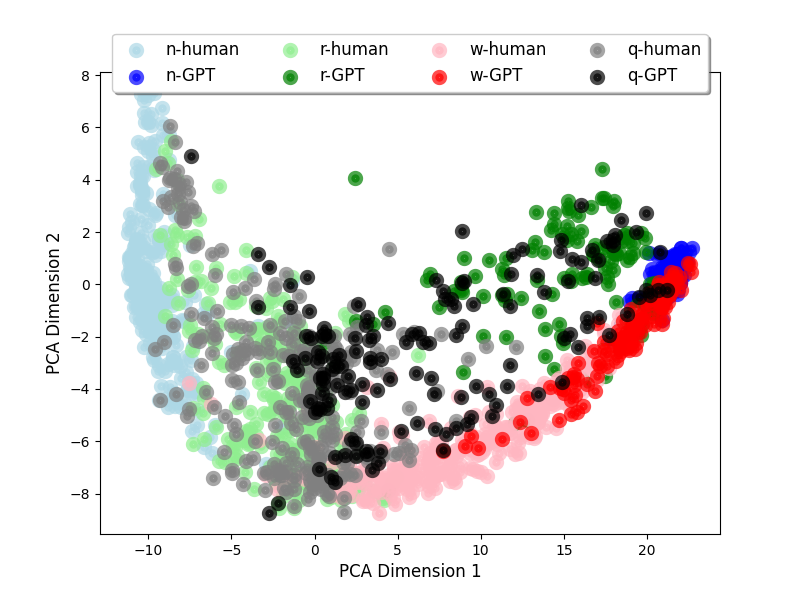}
\end{minipage}
\vspace{-0.2cm}
}
\subfloat[\footnotesize Review]{\label{fig:ad1}
\begin{minipage}[b]{0.24\linewidth}
\centering
\includegraphics[width=1.\textwidth]{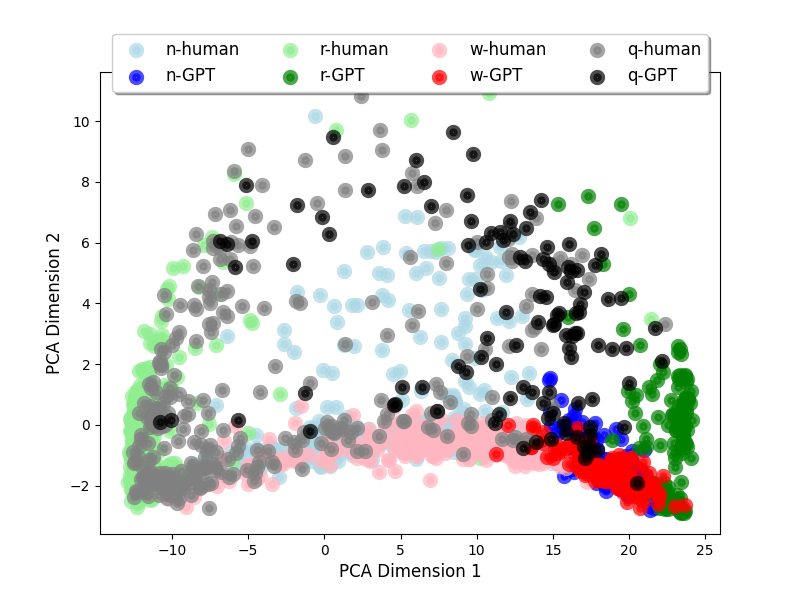}
\end{minipage}
\vspace{-0.2cm}
}
\subfloat[\footnotesize Writing]
{\label{fig:ad2}
\begin{minipage}[b]{0.24\linewidth}
\centering
\includegraphics[width=1.\textwidth]{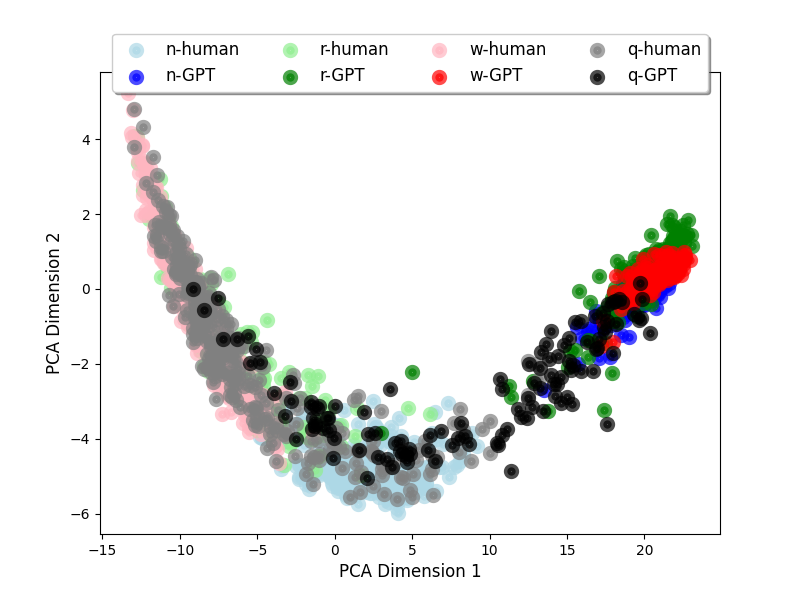}
\end{minipage}
\vspace{-0.2cm}
}
\subfloat[\footnotesize QA]
{\label{fig:ad3}
\begin{minipage}[b]{0.24\linewidth}
\centering
\includegraphics[width=1.\textwidth]{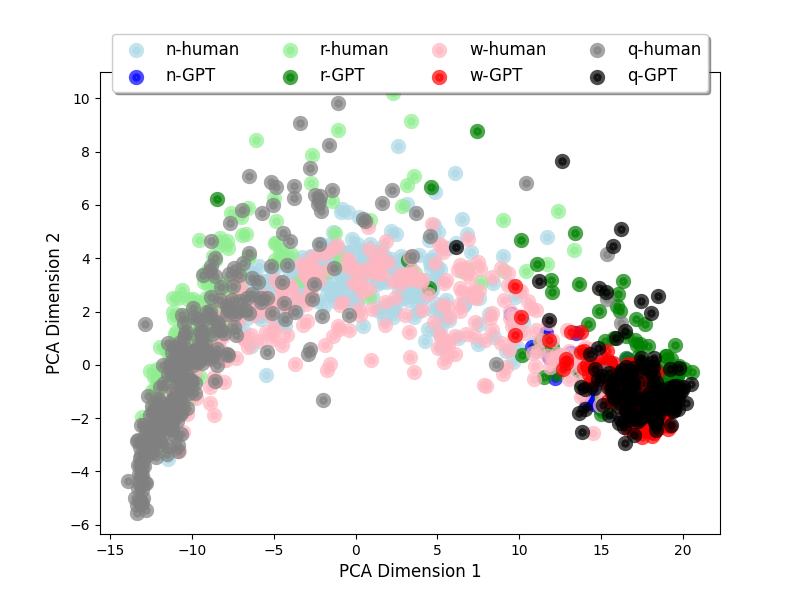}
\end{minipage}
\vspace{-0.2cm}
}
\vspace{-0.3cm}
\caption{\small Representation space visualization on models trained on each task}
\vspace{-0.2cm}
\label{fig:task_visualize}
\end{figure}

\vspace{-0.1cm}
\subsection{Generalization across Topics \& Domains}\label{sec:topic1}
\vspace{-0.2cm}

In this subsection, we conduct experiments to test the RoBERTa-base classification method's generalization across language tasks (and topics). In particular, in Figure~\ref{fig:task_generalize}, we train the model on the human and ChatGPT texts from each language task individually and we check whether it can correctly classify texts from other tasks. Since these tasks have different number of samples in HC-Var, we randomly sample 4,000 ChatGPT and 4,000 human samples for training in all experiments. In each training set, the ChatGPT texts will contain various topics (if exist) and various prompts.

In the experiments, we report the evaluation metrics including F1-score, TPR and 1-FPR. 
Based our reported results in Figure~\ref{fig:task_visualize}, we can see that the trained models will have a performance drop on either human texts or ChatGPT texts. For example, the model trained on ``writing'' cannot effectively detect the ChatGPT generated texts in ``QA''. Similarly, the models trained on ``news'' can hardly recognize human written texts in ``writing''. This result shows that models could make errors on both human or ChatGPT texts. In Appendix~\ref{app:exp}, we provide the results for topic-level generalization, where we can draw similar conclusions. In reality, due to the versatility of ChatGPT to handle various tasks, it is infeasible to collect texts from all possible tasks for model training.

\vspace{-0.3cm}
\subsection{Fine-tuning with a Few Samples Helps Cross-domain / topic Detection}\label{sec:topic2}
\vspace{-0.3cm}

In this part, we identify a ponteitial way to improve the ChatGPT detection in the unforeseen tasks (or topics). It is based on our finding that the models trained in each individual task can learn helpful features for other tasks.  As an evidence, in Figure~\ref{fig:task_visualize}, we visualize the learned representations for various tasks rendered by the trained model in Section~\ref{sec:topic1}. From these figures, we note that the ChatGPT and human texts from unseen tasks during training are also well-separated in the representation space. It demonstrates the models can indeed learn useful features which are helpful to distinguish human and ChatGPT texts in other domains, so we call them ``\textit{\textbf{Transferable Features}}''. 

\begin{table*}[h!]
\centering
\vspace{-0.2cm}
\caption{\small Transfer Learning (Task-level) Performance via Linear Probing and Fine-Tuning}
\vspace{-0.3cm}
\label{tab:transfer_task_lp}
\resizebox{0.88\textwidth}{!}
{
\begin{tabular}{c|| ccc | ccc | ccc | ccc  } 
\hline
\hline
Target & \multicolumn{3}{c|}{  news}&  \multicolumn{3}{c|}{  review} & \multicolumn{3}{c|}{  writing} & \multicolumn{3}{c}{  QA} \\
\hline
 & 
 r$\rightarrow$n &   w$\rightarrow$n & q$\rightarrow$n  & n$\rightarrow$r &   w$\rightarrow$r & q$\rightarrow$r  &   {n$\rightarrow$w} &   {r$\rightarrow$w} &   {q$\rightarrow$w}  & n$\rightarrow$q & r$\rightarrow$q &   {w$\rightarrow$q} \\
No Transfer & 0.946 & 0.835 & 0.927 & 0.854 & 0.980 & 0.981 & 0.681 & 0.858 & 0.827 & 0.819 & 0.789 & 0.771\\
\hline
LP-5 & \red{0.991} & \red{0.990} & \red{0.972} & 0.901 &  \red{0.958} & \red{0.987} & \red{0.901} & \red{0.967} & \red{0.902} &  \red{0.772} & \red{0.860} & \red{0.849}\\
FT-5  & 0.952 & 0.923 & 0.932 &  \red{0.965} & {0.952} & 0.940 &  0.871 & 0.898 & 0.835 & \red{0.848} & \red{0.893} & \red{0.869}\\
LP-Scratch-5 & \multicolumn{3}{c|}{0.959 $\pm$ 0.019}&  \multicolumn{3}{c|}{0.839 $\pm$ 0.057} & \multicolumn{3}{c|}{0.871 $\pm$ 0.024} & \multicolumn{3}{c}{0.697 $\pm$ 0.082}  \\
FT-Scratch-5 & \multicolumn{3}{c|}{0.946 $\pm$ 0.033}&  \multicolumn{3}{c|}{0.925 $\pm$ 0.033} & \multicolumn{3}{c|}{0.867 $\pm$ 0.021} & \multicolumn{3}{c}{0.687 $\pm$ 0.047}  \\
\hline
LP-10 & \red{0.993} & \red{0.992} & \red{0.993} & 0.938 & \red{0.986} & \red{0.984} & 0.916 & \red{0.971} & 0.934 & \red{0.839} & \red{0.887} & \red{0.859}\\
FT-10 & 0.978 & 0.978 & 0.983 & 0.951 & \red{0.968} & \red{0.967}   & 0.936 & 0.956 & 0.936 & \red{0.870} & \red{0.913} & \red{0.909}\\
LP-Scratch-10 & \multicolumn{3}{c|}{0.979 $\pm$ 0.005}&  \multicolumn{3}{c|}{0.934 $\pm$ 0.013} & \multicolumn{3}{c|}{0.906 $\pm$ 0.023} & \multicolumn{3}{c}{0.764 $\pm$ 0.071}  \\
FT-Scratch-10 & \multicolumn{3}{c|}{0.983 $\pm$ 0.006}&  \multicolumn{3}{c|}{0.941 $\pm$ 0.020} & \multicolumn{3}{c|}{0.939 $\pm$ 0.018} & \multicolumn{3}{c}{0.778 $\pm$ 0.051}  \\
\hline
\hline
\end{tabular}
}     
\vspace{1ex}

{\raggedright  \footnotesize We use the blue color to highlight the case that transfer learning outperforms training from scratch.}
\vspace{-0.3cm}
\end{table*}

To further verify the existence of transferable features, we conduct experiments to investigate transfer learning~\citep{hendrycks2019using} for domain adaption. In reality, if the model trainer encounters test samples from the language tasks (or topics) which are not involved in the training set, it is a practical and feasible solution for them to collect several samples in the same task as the test sample by themselves. Therefore, in our study, we consider two types of transfer learning strategies: \textit{Linear Probing (LP)}, which refers to the strategy that only the linear classifier (based on extracted features) is optimized; and \textit{Fine Tuning (FT)} which refers to the strategy that all layers are optimized. 

In our experiment, we consider there are 5 and 10 more samples from both human data and ChatGPT texts are sampled for fine-tuning the models.
In Table~\ref{tab:transfer_task_lp}, we report the tuned models performance (F1 score) when tested on different targeted (downstream) tasks from various source models. For example, ``$r\rightarrow n$'' means the model transferred from ``review'' for a downstream task ``news''.
Besides, we also include the original performance (from Figure~\ref{fig:task_generalize}), which is the performance before transfer learning (denoted as ``No Transfer'' in Table~\ref{tab:transfer_task_lp}).
For comparison, we report the result if these models are tuned from scratch (on pre-trained RoBERTa-base model without training for detection).  

From the result, we can see transfer learning can benefit the detection performance in general. For example, when compared with ``No Transfer'', linear probing (LP) or Fine-tuning (FT) can improve the downstream task performance in most cases (except for $w\rightarrow r$ with 5 training samples). Moreover,
when compared to the models training from scratch, the transferred models also achieve higher performance in all considered language tasks. It suggests that those pre-trained models can offer helpful features beyond the collected data samples for down-stream tuning.
These results indeed show that there are shared features, a.k.a, transferable features, which are generally useful to distinguish human and ChatGPT texts in various domains. 
Remarkably, in Section~\ref{sec:theory}, we introduce the notion ``ChatGPT Direction'', which contains the fundamental and principal features to distinguish human and ChatGPT texts. Ideally, these features should be universally helpful for ChatGPT detection in all tasks and topics. However, it is hard to verify their existence in reality, because of the difficulty to consider all possible topics and tasks that ChatGPT can handle. Thus, we use ``transferable features'' to refer the shared features only in our studied topics and tasks.

\vspace{-0.3cm}
\section{Conclusion and Limitation}
\vspace{-0.3cm}
\textbf{Conclusion}: In this paper, we conduct a comprehensive analysis on the generalization behavior of training-based ChatGPT detection methods. Due to the limitation of existing datasets, we collect a new dataset HC-Var, with various types of ChatGPT generated texts and human texts. Our empirical and theoretical studies draw key findings on factors which affect the generalization. We provide insights on the data collection and domain adaption strategy for ChatGPT detection.

\textbf{Limitation}: There also other factors which can influence the detection that are not discussed. For example, we have not investigate the scenario that the texts are composed by other language models, such as LLaMA2~\citep{touvron2023llama}, or first generated by ChatGPT and then manipulated (i.e., rephrased) by other language models. It is also likely that a given candidate text is partially written by ChatGPT. Besides, as a foundation model, ChatGPT can achieve much more language tasks, such as programming code writing~\citep{tian2023chatgpt}. In this work, our major scope and objective is to provide a comprehensive analysis on the generalization of those training-based detectors.

\bibliography{sample}
\bibliographystyle{iclr2024_conference}

\appendix
\section{Design of Dataset}\label{app:dataset}

\subsection{Prompts}
In this part, we provide the details of our prompt design.

For news, with a news summary $<$Summary$>$ from AG-News Dataset, we consider the prompts:
\begin{itemize}[itemsep=0pt, leftmargin=*,topsep=0pt]
    \item \textbf{\textit{\blue{P1:}}} Write a [\gray{150, 300}] words article following the summary: $<$Summary$>$
   \item \textbf{\textit{\blue{P1:}}} Write a [\gray{150, 300}] words article like a commentator following the summary: $<$Summary$>$
   \item \textbf{\textit{\blue{P1:}}} Write a [\gray{150, 300}] words article like a journalist following the summary: $<$Summary$>$
\end{itemize}

For Review, with a movie title $<$MovieTitle$>$, we consider the prompts:
\begin{itemize}[itemsep=0pt, leftmargin=*,topsep=0pt]
    \item \textbf{\textit{\blue{P1:}}} Write a review for $<$MovieTitle$>$  in [\gray{50, 100, 200}] words.
    \item \textbf{\textit{\blue{P2:}}} Develop an engaging and creative review for $<$MovieTitle$>$ in [\gray{50, 100, 200}] words. Follow the writing style of the movie comments as in popular movie review websites such as imdb.com. 
    \item \textbf{\textit{\blue{P3:}}} Complete the following: I just watched $<$MovieTitle$>$. It is [\gray{enjoyable, just OK, mediocre, unpleasant, great}]. [\gray{It is because that, The reason is that, I just feel that, ...}]. \footnote{Each word / phrase in the  gray list has the same chance to be randomly selected. In P3, the each generated text is randomly truncated to 50-200 tokens. }
\end{itemize}

For writing, with a essay topic $<$EssayTitle$>$, we consider the prompts:
\begin{itemize}[itemsep=0pt, leftmargin=*,topsep=0pt]
    \item \textbf{\textit{\blue{P1:}}}Write a [\gray{200, 300}] words essay like a novelist with the following title: $<$EssayTitle$>$.
    \item \textbf{\textit{\blue{P2:}}}Write a [\gray{200, 300}] words essay with the following title: $<$EssayTitle$>$.
    \item \textbf{\textit{\blue{P3:}}}Write a [\gray{200, 300}] words essay like a high school student with the following title: $<$EssayTitle$>$.
\end{itemize}

For QA, with a question $<$Q$>$, we consider the prompts:
\begin{itemize}[itemsep=0pt, leftmargin=*,topsep=0pt]
    \item \textbf{\textit{\blue{P1:}}} Answer the following question in [\gray{50, 100, 150}] words. $<$Q$>$ 
    \item \textbf{\textit{\blue{P2:}}} Act as you are a user in Reddit or Quora, answer the question in [{\small \gray{50,100,150}}] words. $<$Q$>$ 
    \item \textbf{\textit{\blue{P3:}}} Answer the following question in [\gray{50, 100, 150}] words. $<$Q$>$ Explain like I am five.
\end{itemize}

\subsection{Examples}
In this part, we also provide example texts from both human and ChatGPT, in review (for the movie ``Interstellar''), and QA (for the question ``Buying a house, how much should my down payment be? ''). We include the examples about news and writing because they are longer. From these example, we find that the human texts can be indeed more conversaional and informal than ChatGPT texts. Some of our prompts like ``P3'' in reviews and ChatGPT can successfully make the generated texts to be more conversational and casual.

\begin{longtable}[t]
{|c|c|p{9cm}|}
\caption{Example for Human Texts and ChatGPT texts in IMDb Review for movie ``Interstellar''}
\label{table:imdb_text}\\
\hline \hline
Class & Prompt & Texts\\
\hline
\hline
Human & N/A & I think just about everything has been said about this film now. But, I can still tell you what this masterpiece is to me. To me, this movie is possibly the most relevant movie ever, because it questions our own humanity relative to the Universe. Whether that's our ability to love, think, or persevere and walk into the unknown. We are explorers, and curious at heart. This untameable curiosity is not our end, but our beginning. It is what advanced this civilization and it will continue to do so. So never, never let anybody tell you that we shouldn't look towards the stars and wonder, because that's what makes us human. Without this stargazing we are merely animals, accepting our fate in the dust.\\
\hline
ChatGPT & P1 & "Interstellar" is a cinematic masterpiece that transcends time and space. Christopher Nolan's visionary direction, paired with Hans Zimmer's haunting score, delivers an epic journey through the cosmos. The film's scientific accuracy and emotional depth are nothing short of awe-inspiring. Matthew McConaughey's performance as Cooper is a tour de force, grounding the story's grandeur with genuine humanity. The visual effects are breathtaking, immersing viewers in wormholes and distant planets. While the plot can be complex, it rewards those willing to delve into its intricate layers. "Interstellar" is a thought-provoking, visually stunning odyssey that will leave you pondering the mysteries of the universe long after the credits roll.\\
\hline
ChatGPT & P2 & 
"Interstellar" is a mind-bending journey through the cosmos that will leave you awestruck and pondering the mysteries of the universe long after the credits roll. Christopher Nolan's visionary direction takes us on a rollercoaster ride through wormholes, black holes, and the very fabric of time and space. Matthew McConaughey delivers a stellar performance as Cooper, a father willing to sacrifice everything for the survival of humanity. Hans Zimmer's haunting score adds an emotional depth that tugs at your heartstrings. With stunning visuals and a thought-provoking narrative, "Interstellar" is a cinematic masterpiece that transcends space and time to become an unforgettable exploration of human resilience and the power of love. \\
\hline
ChatGPT & P3 & The movie's exploration of space and time has left me with a sense of awe and wonder. Christopher Nolan's direction and Hans Zimmer's hauntingly beautiful score created a cinematic experience that was truly immersive.
The way the film delves into complex scientific concepts like black holes, time dilation, and the theory of relativity was both mind-boggling and intellectually stimulating. It's one of those rare movies that not only entertains but also challenges your understanding of the universe.
The emotional depth of the story, particularly the bond between Cooper (Matthew McConaughey) and his daughter Murph (Jessica Chastain and Mackenzie Foy), tugged at my heartstrings. Their relationship, which spans across time and space, added a profound layer to the film's narrative.\\
\hline
\hline
\end{longtable}

\begin{longtable}[t]
{|c|c|p{9cm}|}
\caption{Example for Human Texts and ChatGPT texts in QA. The question is ``Buying a house, how much should my down payment be? ''}
\label{table:finance_text}\\
\hline \hline
Class & Prompt & Texts\\
\hline
\hline
Human & N/A& How much should my down payment be? Ideally 20\% of the purchase price because with 20\% of the purchase price, you don't have to pay a costly private mortgage insurance (PMI). If you don't have 20\% down and come across a good property to purchase, it is still a good idea to go forward with purchasing with what you are comfortable with, because renting long term is generally never a good idea if you want to build wealth and become financially independent. How much should I keep in my emergency fund? People say 3-12 months of living expenses. Keep in mind though, in most cases, if you lose your job, you are entitled to unemployment benefits from the government. How long should my mortgage be? 30 year amortization is the best. You can always opt to pay more each month. But having that leverage with a 30 year loan can allow you to invest your savings in other opportunities, which can yield more than mortgage interest. Best of luck!.\\
\hline
ChatGPT & P1 & The ideal down payment for buying a house typically ranges from 10\% to 20\% of the home's purchase price. A 20\% down payment is often recommended because it helps avoid private mortgage insurance (PMI), reduces monthly mortgage payments, and demonstrates financial stability to lenders. However, some loan programs allow for lower down payments, such as 3.5\% for FHA loans or 5\% for conventional loans. The specific amount depends on your financial situation, credit score, and the housing market. A higher down payment can offer financial security and lower long-term costs, but it's essential to balance it with your overall financial goals and budget.\\
\hline
ChatGPT & P2 & 
For a down payment on a house, aim for 20\% of the home's purchase price. This percentage helps you avoid private mortgage insurance (PMI), which can save you money in the long run. However, some programs allow as low as 3.5\% or 5\%, like FHA or conventional loans. The ideal amount depends on your financial situation and goals. A higher down payment offers financial security, while a lower one may be more accessible upfront. Balance your budget and future plans to decide what works best for you.\\
\hline
ChatGPT & P3 & When you want to buy a house, you need to save some money to give to the people who are selling the house. This money is called a "down payment." Imagine it's like when you want to trade your toys with a friend; you might need to give them a few of your toys to get their cool toy. Usually, grown-ups suggest giving about 20\% of the house's price as a down payment. So, if the house costs \$100, you'd give \$20 as a down payment. This helps because the more you give upfront, the less money you need to borrow (like borrowing toys), and it can also make it easier to get a loan to buy the house. But remember, it can vary, so talk to a grown-up or a financial expert to decide what's best for you.\\
\hline
\hline
\end{longtable}

\section{Additional experiments}\label{app:exp}

In this part, we provide additional experimental results which we mentioned in the main text. In Section~\ref{app:not}, we provide discussions on the reason we choose F1-score as the universal metric in Section~\ref{sec:prompts}.
In Section~\ref{app:other}, we analyze the generalization performance for various model architectures. In Section~\ref{app:len}, we include the whole results about the study of generalization on lengh distribution shift. In Section~\ref{app:transfer}, we provide the transfer learning results about topic-level generalization.

\subsection{Additional Results about Section~\ref{sec:prompts}}\label{app:not}

In the main text in Section~\ref{sec:prompts}, we choose F1-score as the standard for model performance evaluation. We also mentioned that the all trained models (under various prompts) have a similar False Positive Rate. Therefore, the True Positive Rate decides the F1-score. In this part, to support our claims, we provide the complete results in Figure~\ref{fig:app11p}, Figure~\ref{fig:app12p} and
 Figure~\ref{fig:app13p}. From the results, we can find all the models have similar (1-fpr) in all considered tasks.

\begin{figure}[t]
\vspace{-0.5cm}
\subfloat[\footnotesize News]
{\label{fig:ad221231ah}
\begin{minipage}[b]{0.24\linewidth}
\centering
\includegraphics[width=1.\textwidth]{figures/tt1.png}
\end{minipage}
\vspace{-0.2cm}
}
\subfloat[\footnotesize Review]{\label{fig:ad1a23123h}
\begin{minipage}[b]{0.24\linewidth}
\centering
\includegraphics[width=1.\textwidth]{figures/tt2.png}
\end{minipage}
\vspace{-0.2cm}
}
\subfloat[\footnotesize Writing]
{\label{fig:ad2123ah}
\begin{minipage}[b]{0.24\linewidth}
\centering
\includegraphics[width=1.\textwidth]{figures/tt3.png}
\end{minipage}
\vspace{-0.2cm}
}
\subfloat[\footnotesize QA]
{\label{fig:qqq123123aha}
\begin{minipage}[b]{0.24\linewidth}
\centering
\includegraphics[width=1.\textwidth]{figures/tt4.png}
\end{minipage}
\vspace{-0.2cm}
}
\vspace{-0.2cm}
\caption{\small Generalization of \textbf{RoBERTa-base models} among various prompts.}
\label{fig:app11p}
\subfloat[\footnotesize News]
{\label{fig:ad232ah}
\begin{minipage}[b]{0.24\linewidth}
\centering
\includegraphics[width=1.\textwidth]{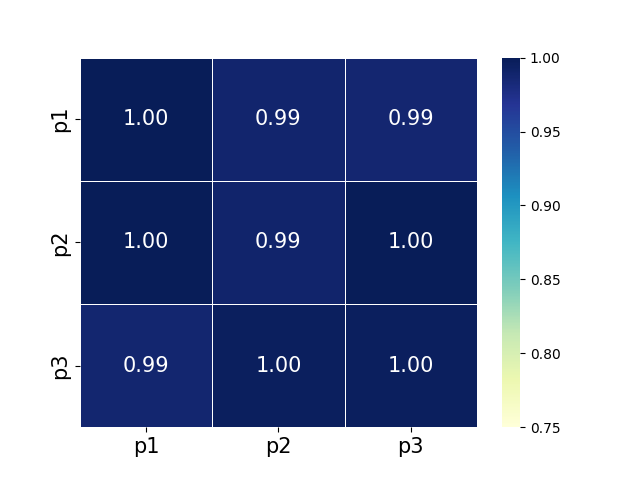}
\end{minipage}
\vspace{-0.2cm}
}
\subfloat[\footnotesize Review]{\label{fig:ad1a12he}
\begin{minipage}[b]{0.24\linewidth}
\centering
\includegraphics[width=1.\textwidth]{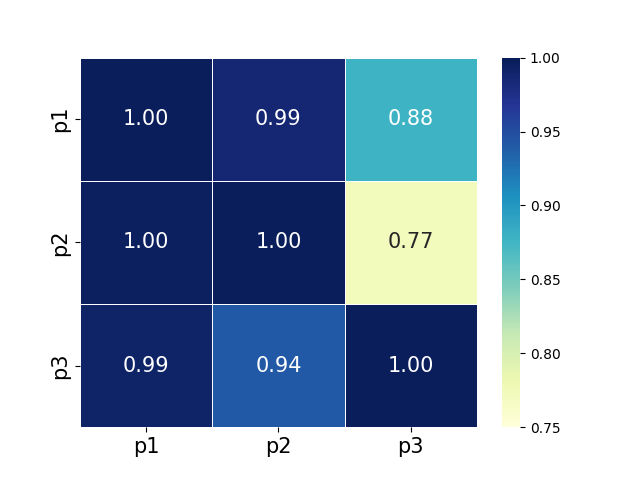}
\end{minipage}
\vspace{-0.2cm}
}
\subfloat[\footnotesize Writing]
{\label{fig:ad2ae23rt}
\begin{minipage}[b]{0.24\linewidth}
\centering
\includegraphics[width=1.\textwidth]{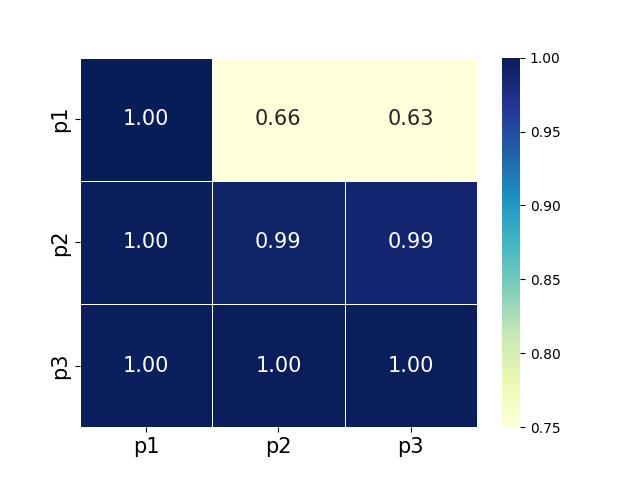}
\end{minipage}
\vspace{-0.2cm}
}
\subfloat[\footnotesize QA]
{\label{fig:qqqa34arte}
\begin{minipage}[b]{0.24\linewidth}
\centering
\includegraphics[width=1.\textwidth]{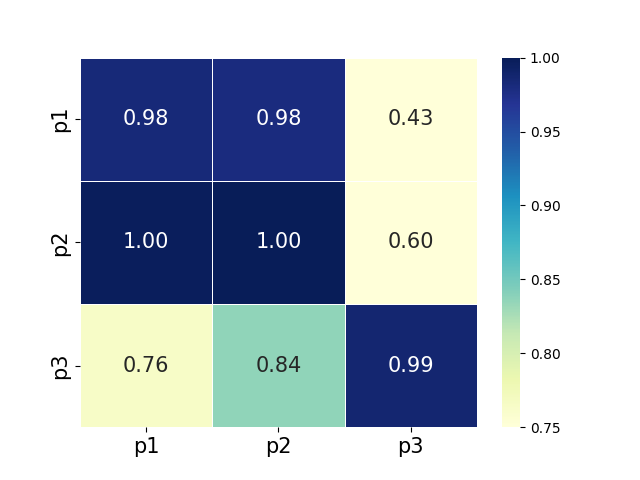}
\end{minipage}
\vspace{-0.2cm}
}
\vspace{-0.2cm}
\caption{\small \textbf{TPR} Generalization of RoBERTa-base models among various prompts.}
\label{fig:app12p}
\subfloat[\footnotesize News]
{\label{fig:ad2oi}
\begin{minipage}[b]{0.24\linewidth}
\centering
\includegraphics[width=1.\textwidth]{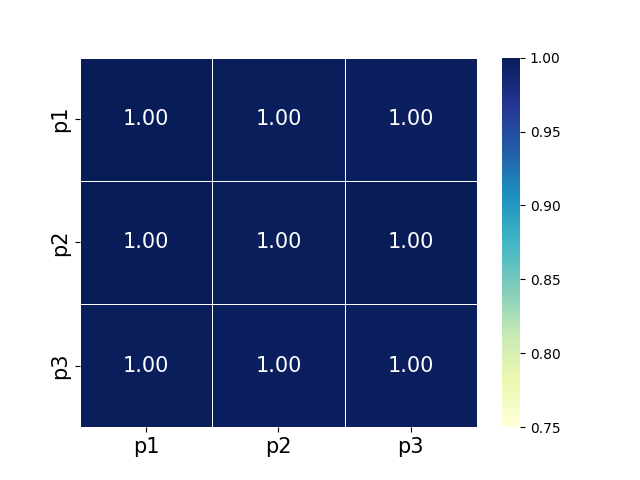}
\end{minipage}
\vspace{-0.2cm}
}
\subfloat[\footnotesize Review]{\label{fig:ad1yui}
\begin{minipage}[b]{0.24\linewidth}
\centering
\includegraphics[width=1.\textwidth]{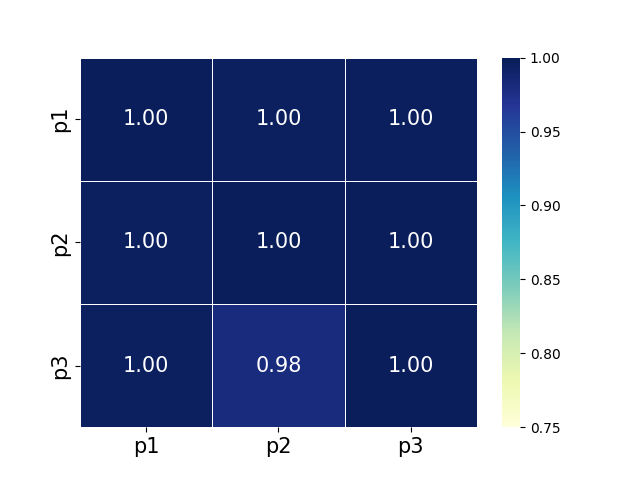}
\end{minipage}
\vspace{-0.2cm}
}
\subfloat[\footnotesize Writing]
{\label{fig:123123}
\begin{minipage}[b]{0.24\linewidth}
\centering
\includegraphics[width=1.\textwidth]{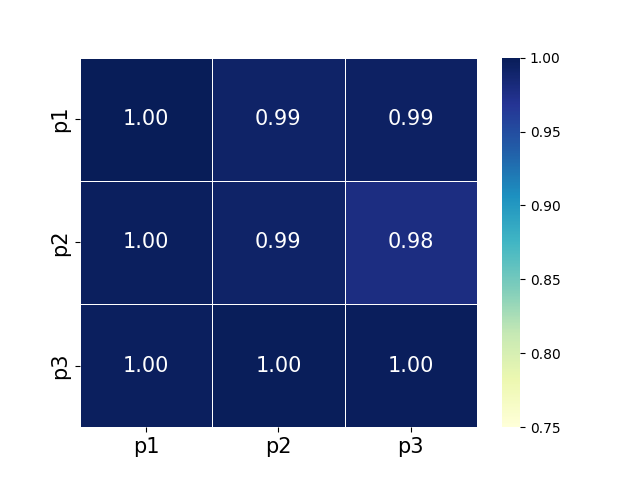}
\end{minipage}
\vspace{-0.2cm}
}
\subfloat[\footnotesize QA]
{\label{fig:1231242}
\begin{minipage}[b]{0.24\linewidth}
\centering
\includegraphics[width=1.\textwidth]{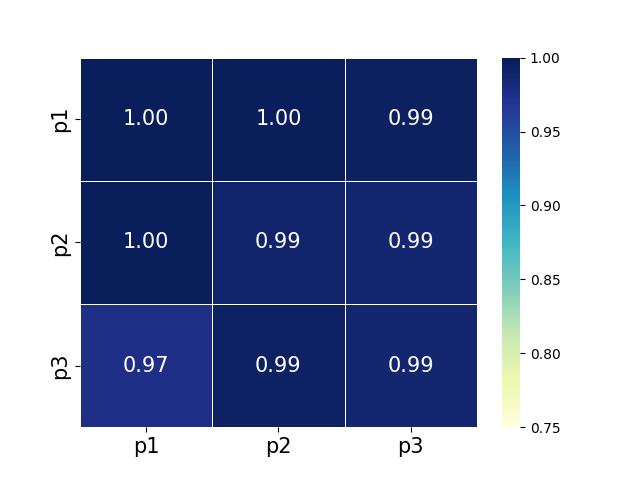}
\end{minipage}
\vspace{-0.2cm}
}
\vspace{-0.2cm}
\caption{\small \textbf{1 - FPR} Generalization of RoBERTa-base models among various prompts.}
\label{fig:app13p}
\end{figure}

\subsection{Additional results on other models}\label{app:other}

\begin{figure}[t]
\vspace{-0.5cm}
\subfloat[\footnotesize News]
{\label{fig:ad22ah}
\begin{minipage}[b]{0.24\linewidth}
\centering
\includegraphics[width=1.\textwidth]{figures/tt1.png}
\end{minipage}
\vspace{-0.2cm}
}
\subfloat[\footnotesize Review]{\label{fig:ad1ah}
\begin{minipage}[b]{0.24\linewidth}
\centering
\includegraphics[width=1.\textwidth]{figures/tt2.png}
\end{minipage}
\vspace{-0.2cm}
}
\subfloat[\footnotesize Writing]
{\label{fig:ad2ah}
\begin{minipage}[b]{0.24\linewidth}
\centering
\includegraphics[width=1.\textwidth]{figures/tt3.png}
\end{minipage}
\vspace{-0.2cm}
}
\subfloat[\footnotesize QA]
{\label{fig:qqqaha}
\begin{minipage}[b]{0.24\linewidth}
\centering
\includegraphics[width=1.\textwidth]{figures/tt4.png}
\end{minipage}
\vspace{-0.2cm}
}
\vspace{-0.2cm}
\caption{\small Generalization of \textbf{RoBERTa-base models} among various prompts.}
\label{fig:app11}
\subfloat[\footnotesize News]
{\label{fig:ad2ah}
\begin{minipage}[b]{0.24\linewidth}
\centering
\includegraphics[width=1.\textwidth]{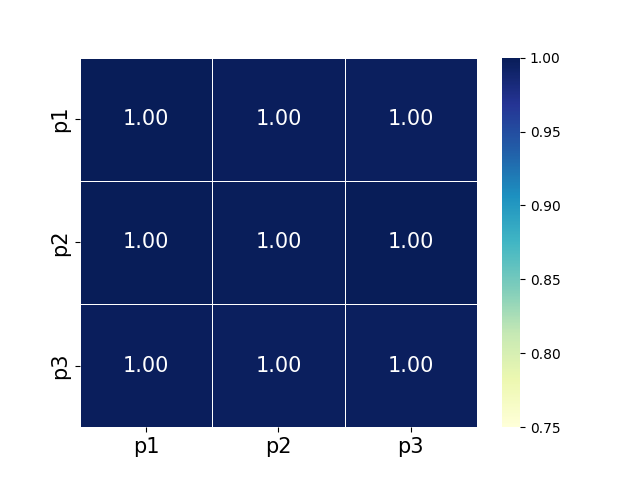}
\end{minipage}
\vspace{-0.2cm}
}
\subfloat[\footnotesize Review]{\label{fig:ad1ahe}
\begin{minipage}[b]{0.24\linewidth}
\centering
\includegraphics[width=1.\textwidth]{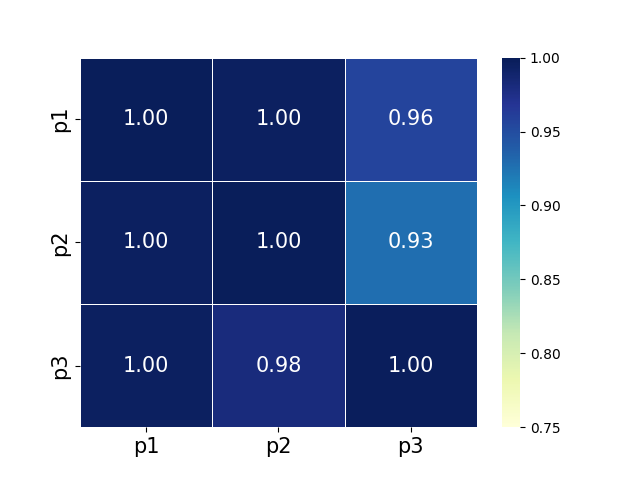}
\end{minipage}
\vspace{-0.2cm}
}
\subfloat[\footnotesize Writing]
{\label{fig:ad2aert}
\begin{minipage}[b]{0.24\linewidth}
\centering
\includegraphics[width=1.\textwidth]{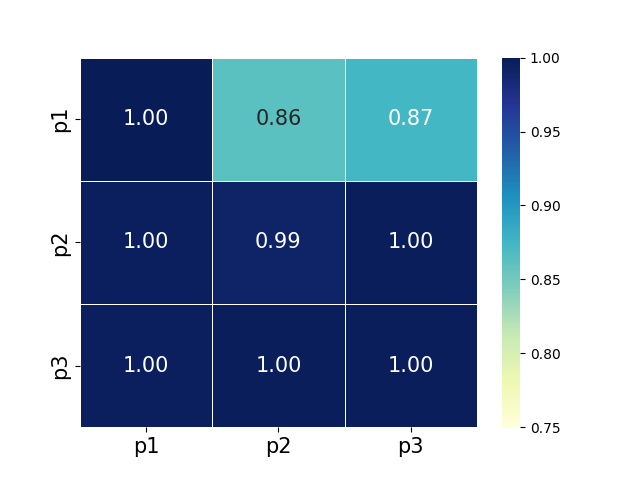}
\end{minipage}
\vspace{-0.2cm}
}
\subfloat[\footnotesize QA]
{\label{fig:qqqaarte}
\begin{minipage}[b]{0.24\linewidth}
\centering
\includegraphics[width=1.\textwidth]{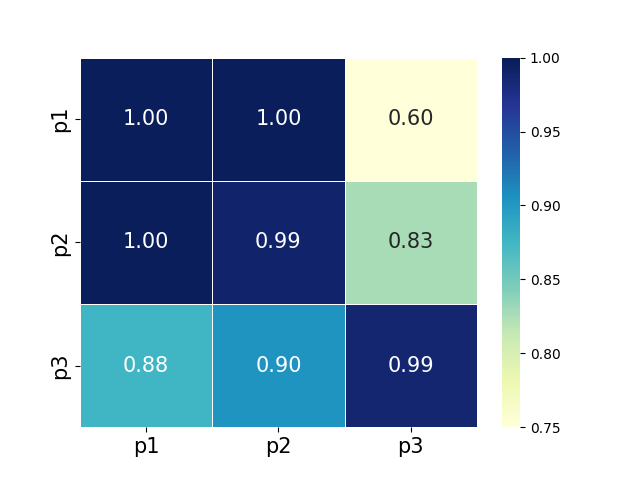}
\end{minipage}
\vspace{-0.2cm}
}
\vspace{-0.2cm}
\caption{\small Generalization of \textbf{RoBERTa-large models}  among various prompts.}
\label{fig:app12}
\subfloat[\footnotesize News]
{\label{fig:ad2oi}
\begin{minipage}[b]{0.24\linewidth}
\centering
\includegraphics[width=1.\textwidth]{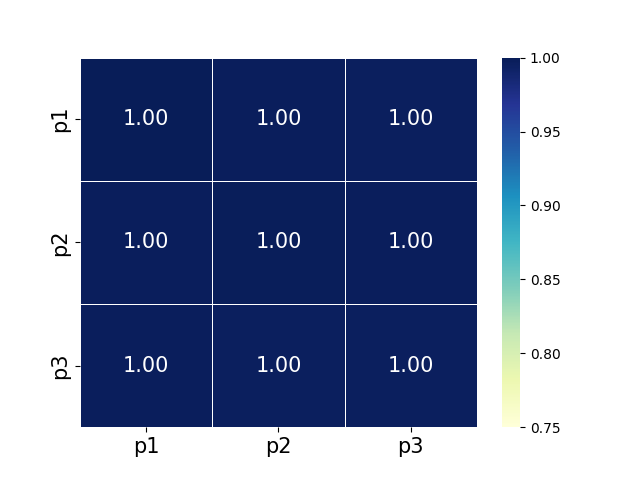}
\end{minipage}
\vspace{-0.2cm}
}
\subfloat[\footnotesize Review]{\label{fig:ad1yui}
\begin{minipage}[b]{0.24\linewidth}
\centering
\includegraphics[width=1.\textwidth]{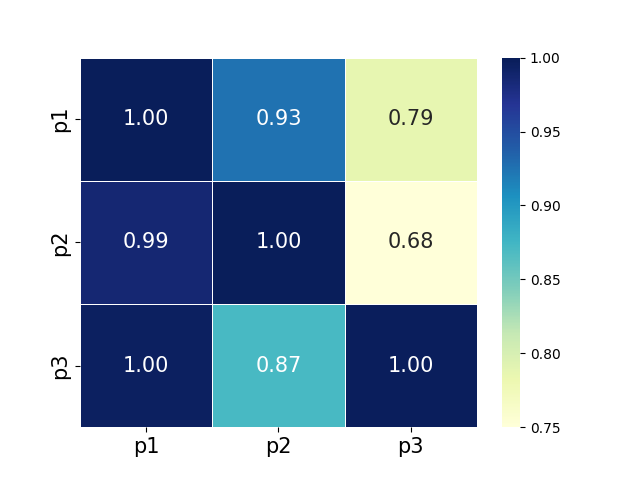}
\end{minipage}
\vspace{-0.2cm}
}
\subfloat[\footnotesize Writing]
{\label{fig:ad2ghj}
\begin{minipage}[b]{0.24\linewidth}
\centering
\includegraphics[width=1.\textwidth]{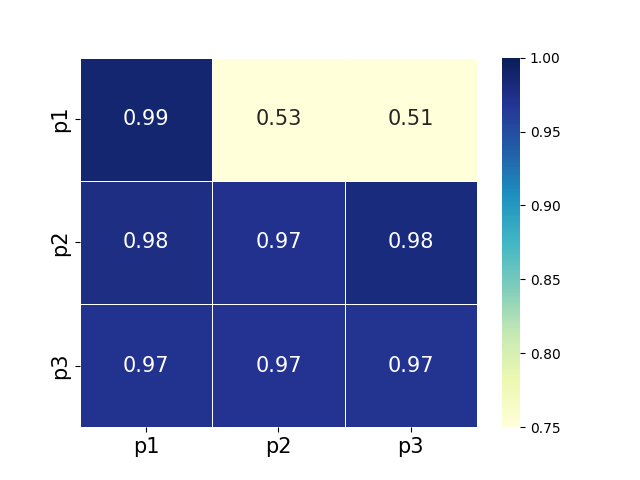}
\end{minipage}
\vspace{-0.2cm}
}
\subfloat[\footnotesize QA]
{\label{fig:qqqalui}
\begin{minipage}[b]{0.24\linewidth}
\centering
\includegraphics[width=1.\textwidth]{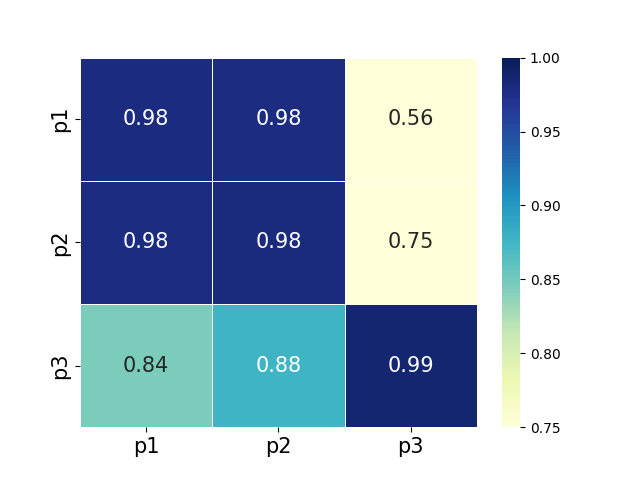}
\end{minipage}
\vspace{-0.2cm}
}
\vspace{-0.2cm}
\caption{\small Generalization of \textbf{T-5-base models} among various prompts.}
\label{fig:app13}
\end{figure}

In Figure~\ref{fig:app11}, Figure~\ref{fig:app12} and Figure~\ref{fig:app13}, we conduct experiments to repeat the similar study as in Section~\ref{sec:prompts}, under different model architectures, RoBERTa-large and T5-base. From the results, we can see that these two models share a similar generalization behavior as RoBERTa-base, which is majorly discussed in the main text. This result suggest that the data distribution make a significant influence on the model generalization. However, the generalization performance between the models is also slightly different, for example, RoBERTa-large models tend to have higher performance than T5-base, which suggests that the model architecture can also make a differene. In Figure~\ref{fig:app21}, Figure~\ref{fig:app22} and Figure~\ref{fig:app23}, we conduct a similar study for topic-level generalization. In Figure~\ref{fig:app31}, Figure~\ref{fig:app32} and Figure~\ref{fig:app33}, we conduct a study for task-level generalization. These results can consistently validate our analysis in the main context, regardless of the model architecture.

\begin{figure}[t]
\centering
\subfloat[\footnotesize F1 Score]{\label{fig:ad1}
\begin{minipage}[b]{0.26\linewidth}
\centering
\includegraphics[width=1.1\textwidth]{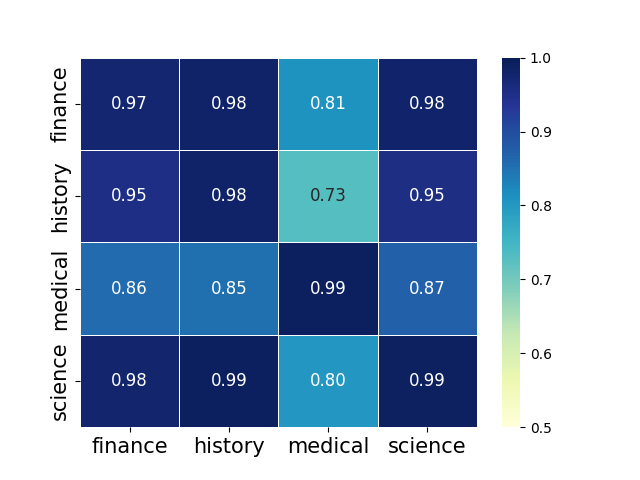}
\end{minipage}
\vspace{-0.2cm}
}
\subfloat[\footnotesize TPR]
{\label{fig:ad2}
\begin{minipage}[b]{0.26\linewidth}
\centering
\includegraphics[width=1.1\textwidth]{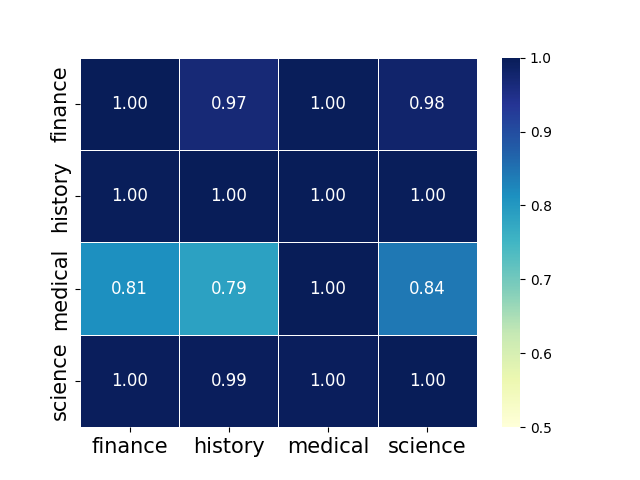}
\end{minipage}
\vspace{-0.2cm}
}
\subfloat[\footnotesize 1 - FPR]
{\label{fig:ad3}
\begin{minipage}[b]{0.26\linewidth}
\centering
\includegraphics[width=1.1\textwidth]{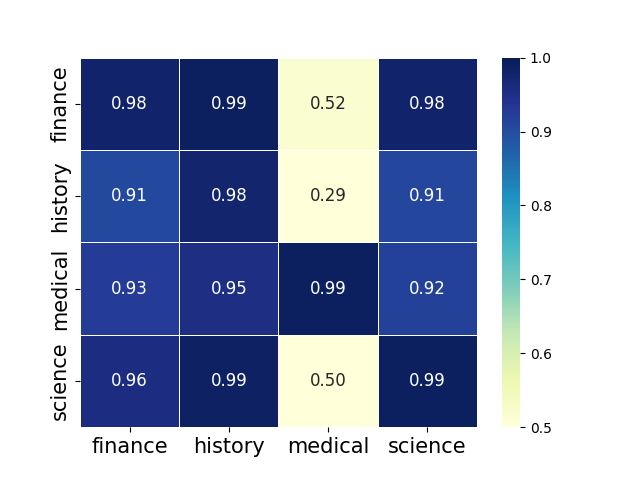}
\end{minipage}
\vspace{-0.2cm}
}
\vspace{-0.2cm}
\caption{\small Generalization of \textbf{RoBERTa-base} across Various Topics in QA }
\label{fig:app21}
\subfloat[\footnotesize F1 Score]{\label{fig:ad1}
\begin{minipage}[b]{0.26\linewidth}
\centering
\includegraphics[width=1.1\textwidth]{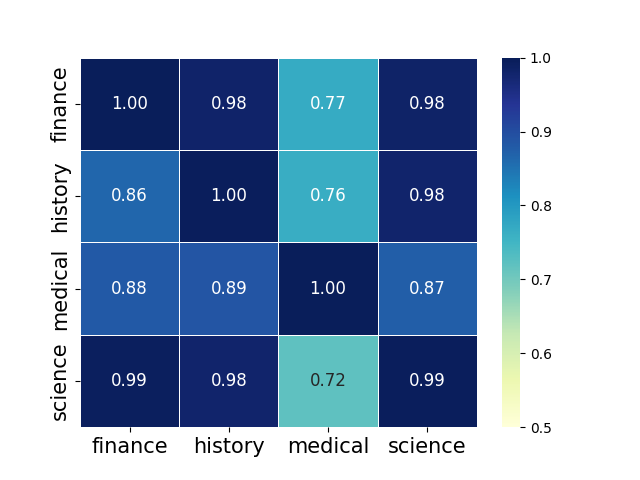}
\end{minipage}
\vspace{-0.2cm}
}
\subfloat[\footnotesize TPR]
{\label{fig:ad2}
\begin{minipage}[b]{0.26\linewidth}
\centering
\includegraphics[width=1.1\textwidth]{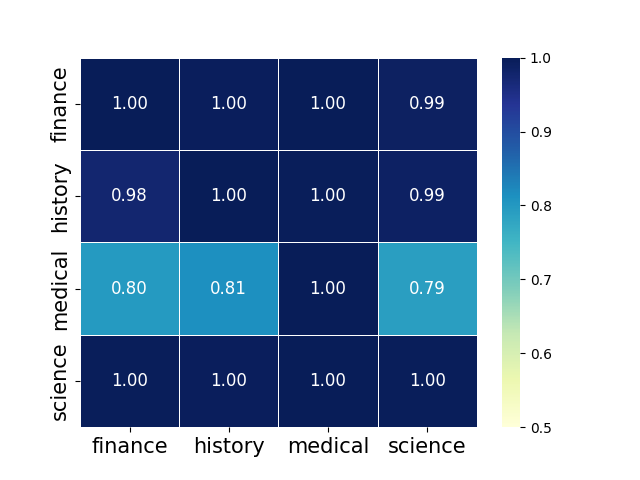}
\end{minipage}
\vspace{-0.2cm}
}
\subfloat[\footnotesize 1 - FPR]
{\label{fig:ad3}
\begin{minipage}[b]{0.26\linewidth}
\centering
\includegraphics[width=1.1\textwidth]{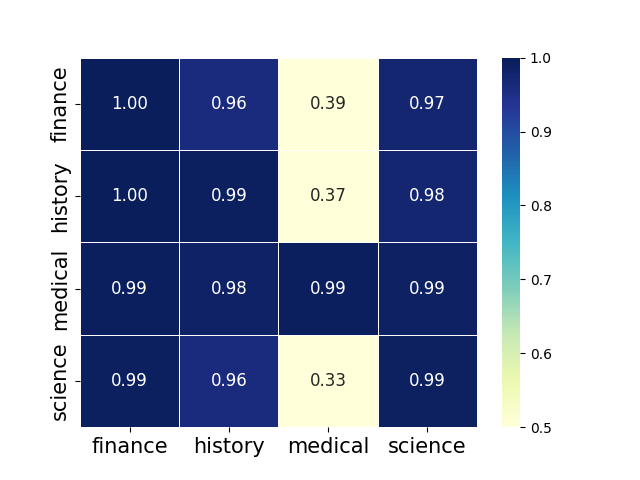}
\end{minipage}
\vspace{-0.2cm}
}
\vspace{-0.2cm}
\caption{\small Generalization of \textbf{RoBERTa-large} across Various Topics in QA }
\label{fig:app22}
\subfloat[\footnotesize F1 Score]{\label{fig:ad1}
\begin{minipage}[b]{0.26\linewidth}
\centering
\includegraphics[width=1.1\textwidth]{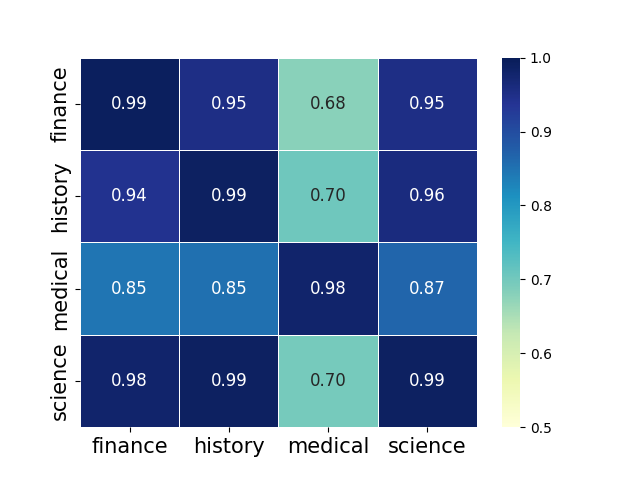}
\end{minipage}
\vspace{-0.2cm}
}
\subfloat[\footnotesize TPR]
{\label{fig:ad2}
\begin{minipage}[b]{0.26\linewidth}
\centering
\includegraphics[width=1.1\textwidth]{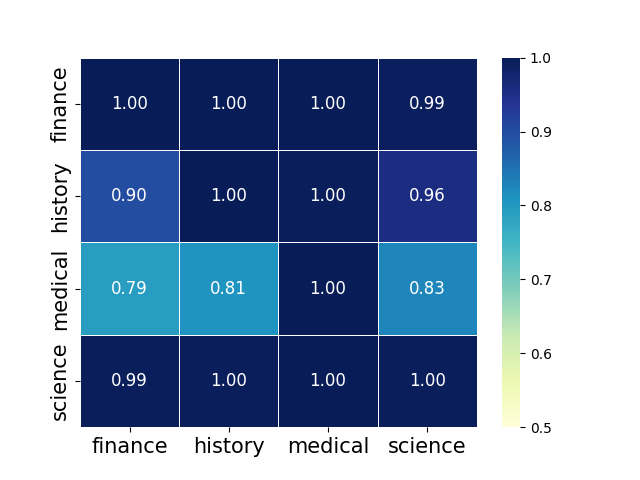}
\end{minipage}
\vspace{-0.2cm}
}
\subfloat[\footnotesize 1 - FPR]
{\label{fig:ad3}
\begin{minipage}[b]{0.26\linewidth}
\centering
\includegraphics[width=1.1\textwidth]{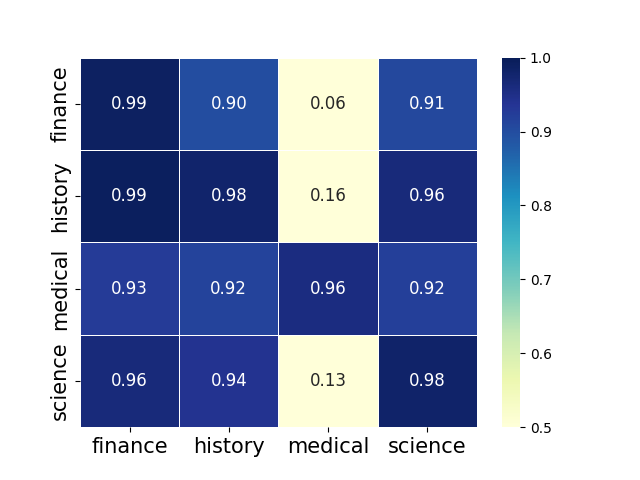}
\end{minipage}
\vspace{-0.2cm}
}
\vspace{-0.2cm}
\caption{\small Generalization of \textbf{t5-base} across Various Topics in QA}
\label{fig:app23}
\end{figure}

\begin{figure}[t]
\centering
\subfloat[\footnotesize F1 Score]{\label{fig:ad1}
\begin{minipage}[b]{0.26\linewidth}
\centering
\includegraphics[width=1.1\textwidth]{figures/rob1_domain222.png}
\end{minipage}
\vspace{-0.2cm}
}
\subfloat[\footnotesize TPR]
{\label{fig:ad2}
\begin{minipage}[b]{0.26\linewidth}
\centering
\includegraphics[width=1.1\textwidth]{figures/rob1_domain333.png}
\end{minipage}
\vspace{-0.2cm}
}
\subfloat[\footnotesize 1 - FPR]
{\label{fig:ad3}
\begin{minipage}[b]{0.26\linewidth}
\centering
\includegraphics[width=1.1\textwidth]{figures/rob1_domain444.png}
\end{minipage}
\vspace{-0.2cm}
}
\vspace{-0.2cm}
\caption{\small Generalization of \textbf{RoBERTa-base} across Various Tasks}
\label{fig:app31}
\subfloat[\footnotesize F1 Score]{\label{fig:ad1}
\begin{minipage}[b]{0.26\linewidth}
\centering
\includegraphics[width=1.1\textwidth]{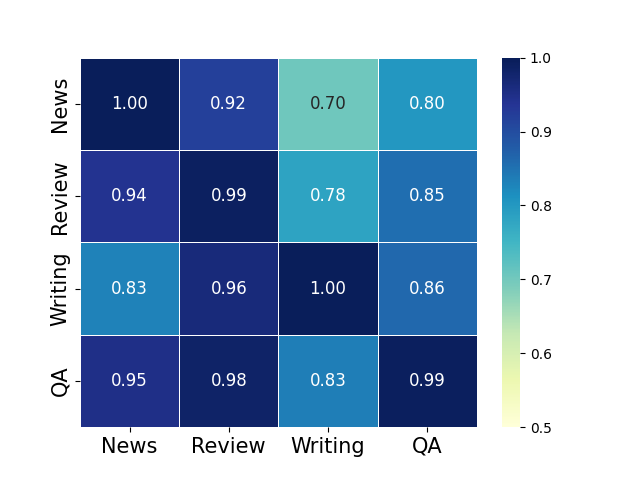}
\end{minipage}
\vspace{-0.2cm}
}
\subfloat[\footnotesize TPR]
{\label{fig:ad2}
\begin{minipage}[b]{0.26\linewidth}
\centering
\includegraphics[width=1.1\textwidth]{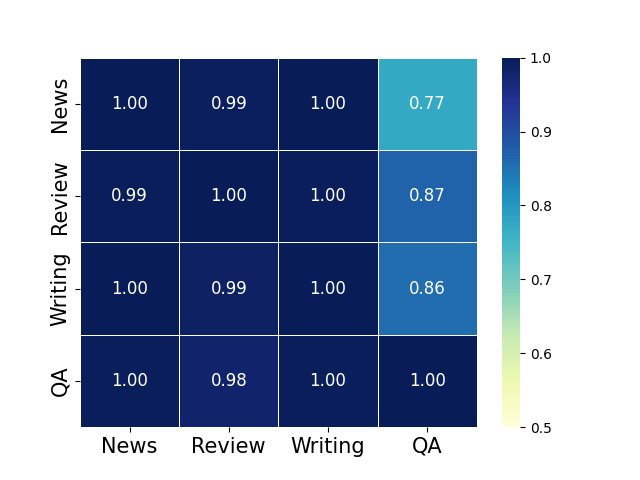}
\end{minipage}
\vspace{-0.2cm}
}
\subfloat[\footnotesize 1 - FPR]
{\label{fig:ad3}
\begin{minipage}[b]{0.26\linewidth}
\centering
\includegraphics[width=1.1\textwidth]{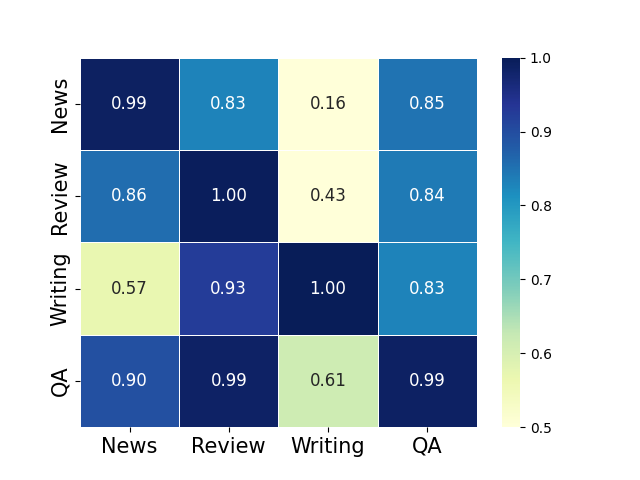}
\end{minipage}
\vspace{-0.2cm}
}
\vspace{-0.2cm}
\caption{\small Generalization of \textbf{RoBERTa-large} across Various Tasks}
\label{fig:app32}
\subfloat[\footnotesize F1 Score]{\label{fig:ad1}
\begin{minipage}[b]{0.26\linewidth}
\centering
\includegraphics[width=1.1\textwidth]{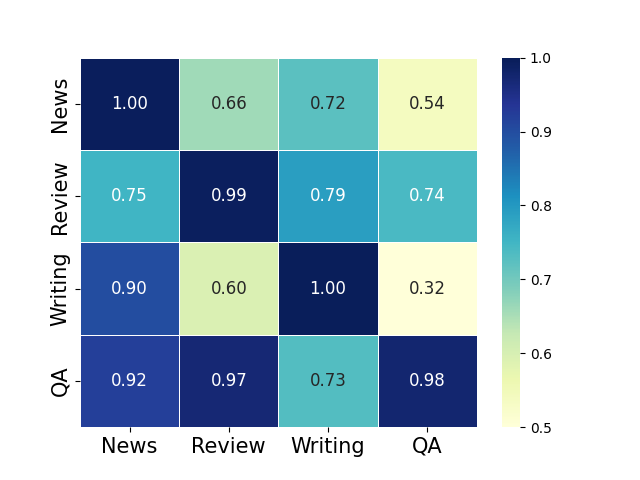}
\end{minipage}
\vspace{-0.2cm}
}
\subfloat[\footnotesize TPR]
{\label{fig:ad2}
\begin{minipage}[b]{0.26\linewidth}
\centering
\includegraphics[width=1.1\textwidth]{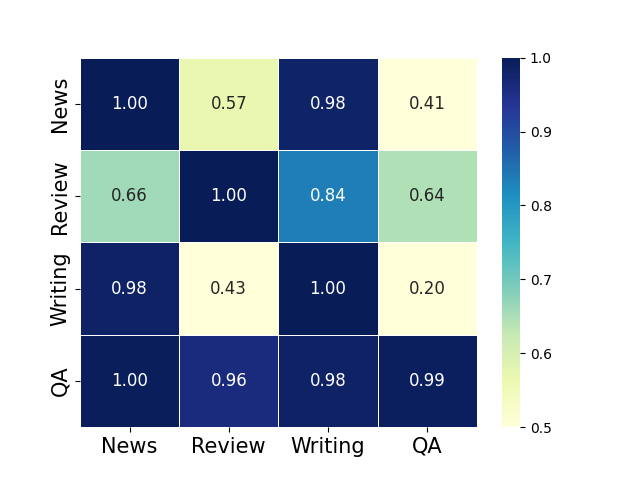}
\end{minipage}
\vspace{-0.2cm}
}
\subfloat[\footnotesize 1 - FPR]
{\label{fig:ad3}
\begin{minipage}[b]{0.26\linewidth}
\centering
\includegraphics[width=1.1\textwidth]{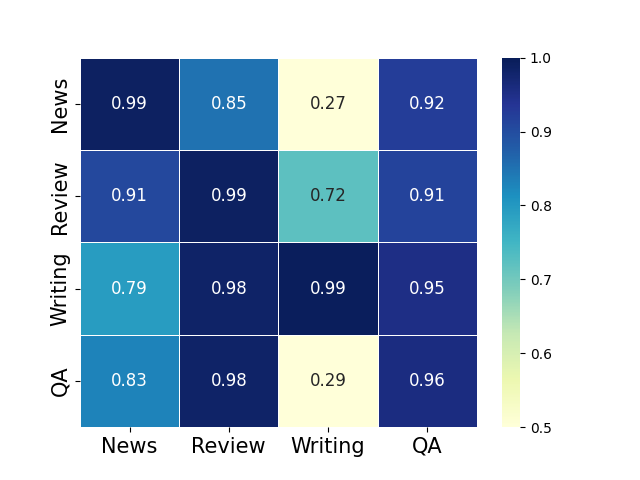}
\end{minipage}
\vspace{-0.2cm}
}
\vspace{-0.2cm}
\caption{\small Generalization of \textbf{t5-base} across Various Tasks}
\label{fig:app33}
\end{figure}

\subsection{Detailed results for Lengths}\label{app:len}

In Section~\ref{sec:len}, we only report the study about the impact from length in one task, ``review''. In this part, we provide the complete results for all tasks in HC-Var. In Figure~\ref{fig:app_len1}, Figure~\ref{fig:app_len2} and Figure~\ref{fig:app_len3}, we provide the same results as in Section~\ref{sec:len}, where we compare the length distribution of human texts and ChatGPT texts (with and without length designation). From the results, we can see that: in Review and QA, our collected dataset has a better alignment with human texts, compared to ChatGPT\#. They meanwhile have a better performance especially on shorter texts. For news and writing. As a result, there is a negligible impact from controlling the length, because all human and ChatGPT texts are long texts.

\begin{figure}[t]
\vspace{0.2cm}
\subfloat[\footnotesize News]
{\label{fig:ad2}
\begin{minipage}[b]{0.24\linewidth}
\centering
\includegraphics[width=1.1\textwidth]{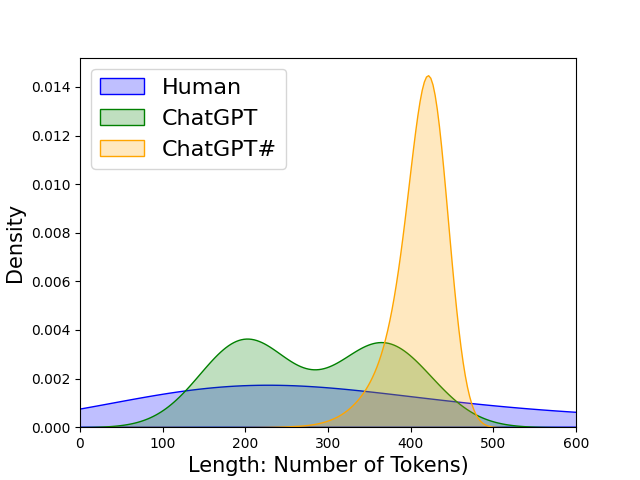}
\end{minipage}
\vspace{-0.2cm}
}
\subfloat[\footnotesize Review]{\label{fig:ad1}
\begin{minipage}[b]{0.24\linewidth}
\centering
\includegraphics[width=1.1\textwidth]{figures/len_dist_review.png}
\end{minipage}
\vspace{-0.2cm}
}
\subfloat[\footnotesize Writing]
{\label{fig:ad2}
\begin{minipage}[b]{0.24\linewidth}
\centering
\includegraphics[width=1.1\textwidth]{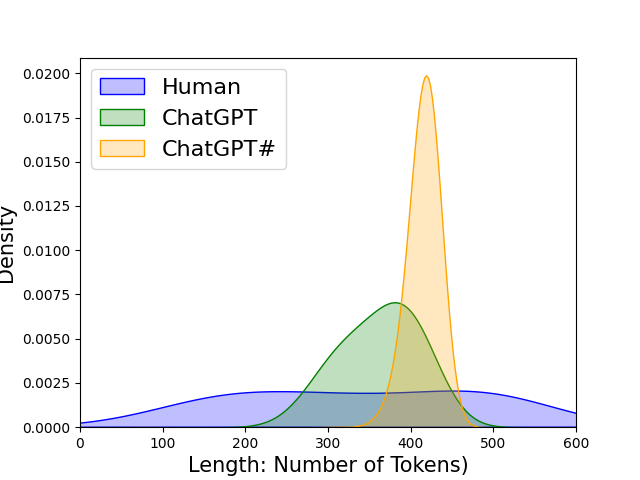}
\end{minipage}
\vspace{-0.2cm}
}
\subfloat[\footnotesize QA]
{\label{fig:ad3}
\begin{minipage}[b]{0.24\linewidth}
\centering
\includegraphics[width=1.1\textwidth]{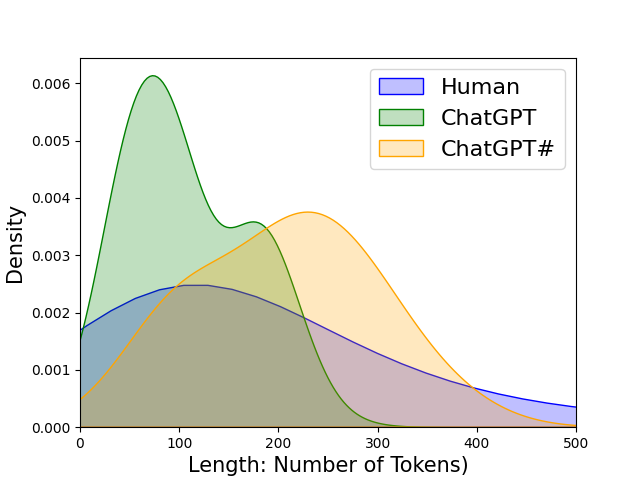}
\end{minipage}
\vspace{-0.2cm}
}
\vspace{-0.2cm}
\caption{\small Length Distribution in Human, ChatGPT (in HC-Var), and ChatGPT\#}
\label{fig:app_len1}
\vspace{0.2cm}
\subfloat[\footnotesize News]
{\label{fig:ad2}
\begin{minipage}[b]{0.24\linewidth}
\centering
\includegraphics[width=1.1\textwidth]{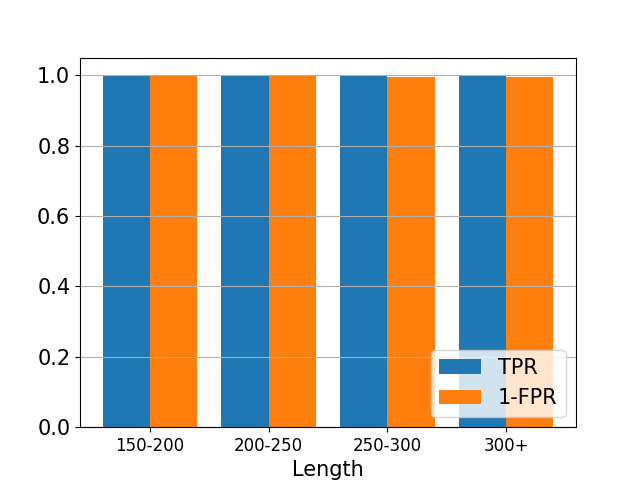}
\end{minipage}
\vspace{-0.2cm}
}
\subfloat[\footnotesize Review]{\label{fig:ad1}
\begin{minipage}[b]{0.24\linewidth}
\centering
\includegraphics[width=1.1\textwidth]{figures/bar_review1.png}
\end{minipage}
\vspace{-0.2cm}
}
\subfloat[\footnotesize Writing]
{\label{fig:ad2}
\begin{minipage}[b]{0.24\linewidth}
\centering
\includegraphics[width=1.1\textwidth]{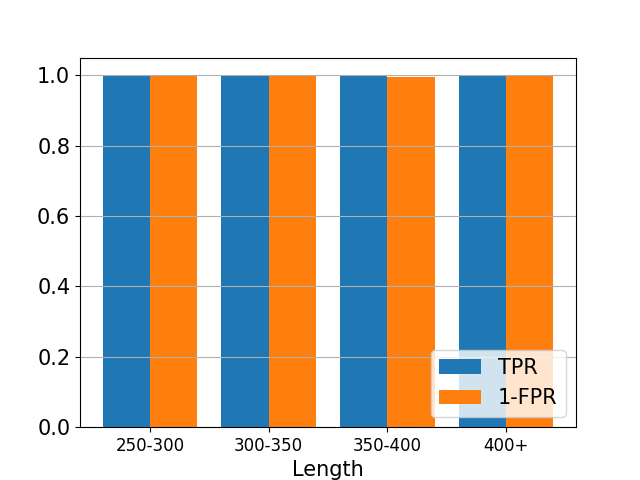}
\end{minipage}
\vspace{-0.2cm}
}
\subfloat[\footnotesize QA]
{\label{fig:ad3}
\begin{minipage}[b]{0.24\linewidth}
\centering
\includegraphics[width=1.1\textwidth]{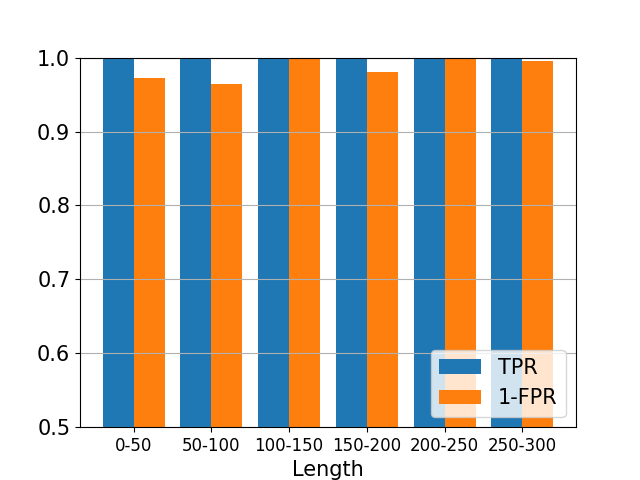}
\end{minipage}
\vspace{-0.2cm}
}
\vspace{-0.2cm}
\caption{\small TPR and 1-FPR of RoBERTa-base model \textbf{trained under ChatGPT from HC-Var}.}
\label{fig:app_len2}
\vspace{0.2cm}
\subfloat[\footnotesize News]
{\label{fig:ad2}
\begin{minipage}[b]{0.24\linewidth}
\centering
\includegraphics[width=1.1\textwidth]{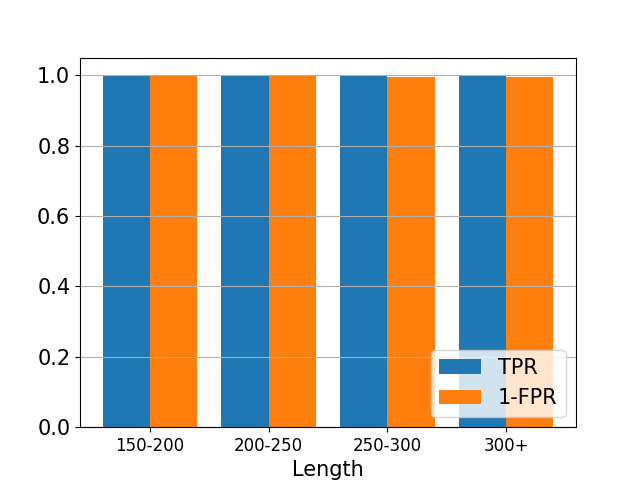}
\end{minipage}
\vspace{-0.2cm}
}
\subfloat[\footnotesize Review]{\label{fig:ad1}
\begin{minipage}[b]{0.24\linewidth}
\centering
\includegraphics[width=1.1\textwidth]{figures/bar_review2.png}
\end{minipage}
\vspace{-0.2cm}
}
\subfloat[\footnotesize Writing]
{\label{fig:ad2}
\begin{minipage}[b]{0.24\linewidth}
\centering
\includegraphics[width=1.1\textwidth]{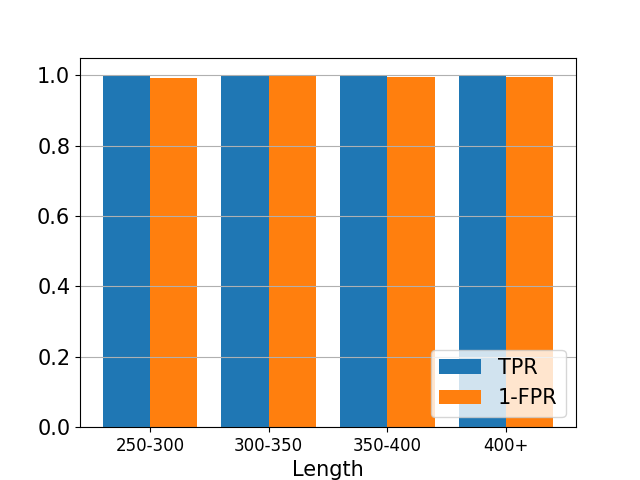}
\end{minipage}
\vspace{-0.2cm}
}
\subfloat[\footnotesize QA]
{\label{fig:ad3}
\begin{minipage}[b]{0.24\linewidth}
\centering
\includegraphics[width=1.1\textwidth]{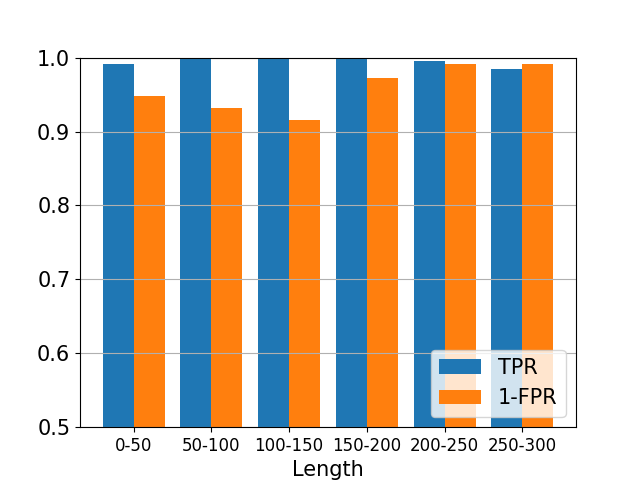}
\end{minipage}
\vspace{-0.2cm}
}
\vspace{-0.2cm}
\caption{\small TPR and 1-FPR of RoBERTa-base model \textbf{trained under ChatGPT\#}.}
\label{fig:app_len3}
\end{figure}

\subsection{Additional results on Topic Level Transferability}\label{app:transfer}

In Figure~\ref{fig:topic_visualize} and \ref{fig:app21}, we conduct experiments to demonstrate the ``topic-level'' generalization for RoBERTa-base detection model, and its latent space feature visualization. From the results, we can also see that the original models may face performance drop on either human or ChatGPT textst. However, the feature representations of the samples in unforeseen tasks are also well-separated, which is similar to the analysis about ``task-level'' generalization in Figure~\ref{fig:task_generalize}. Moreover, in Table~\ref{tab:transfer_topic}, we conduct an experiment using transfer learning, similar to Table~\ref{tab:transfer_task_lp}. From the table, we can also see that the pre-trained models can significantly help the downstream tasks.

\begin{figure}[h]
\vspace{0.2cm}
\subfloat[\footnotesize Finance]
{\label{fig:ad2}
\begin{minipage}[b]{0.24\linewidth}
\centering
\includegraphics[width=1.1\textwidth]{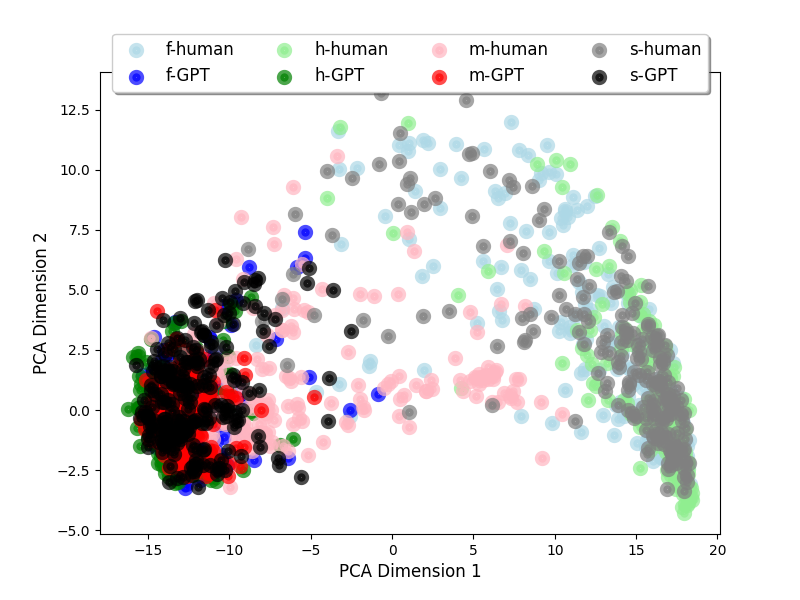}
\end{minipage}
\vspace{-0.2cm}
}
\subfloat[\footnotesize History]{\label{fig:ad1}
\begin{minipage}[b]{0.24\linewidth}
\centering
\includegraphics[width=1.1\textwidth]{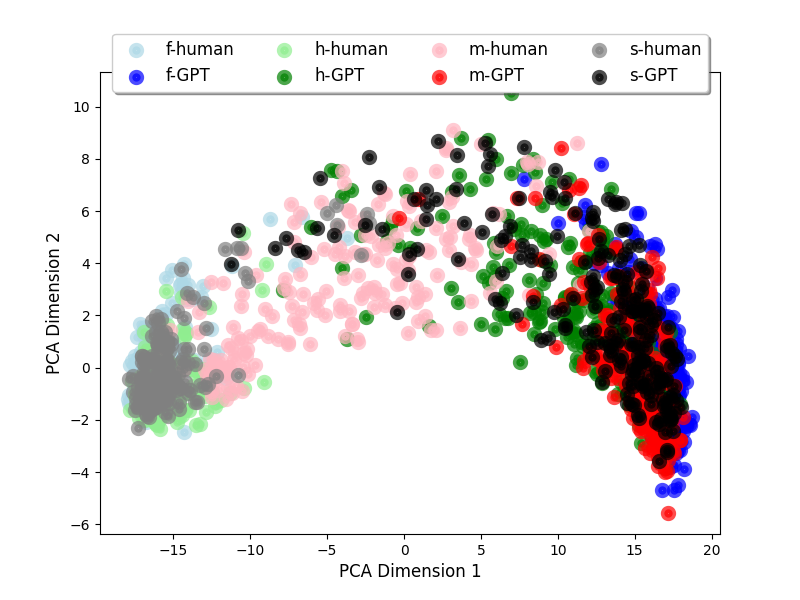}
\end{minipage}
\vspace{-0.2cm}
}
\subfloat[\footnotesize Medical]
{\label{fig:ad2}
\begin{minipage}[b]{0.24\linewidth}
\centering
\includegraphics[width=1.1\textwidth]{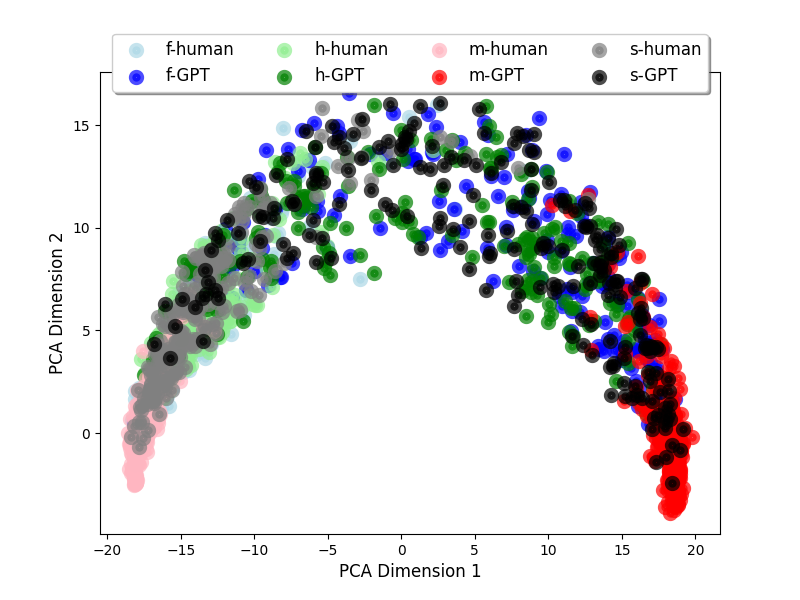}
\end{minipage}
\vspace{-0.2cm}
}
\subfloat[\footnotesize Science]
{\label{fig:ad3}
\begin{minipage}[b]{0.24\linewidth}
\centering
\includegraphics[width=1.1\textwidth]{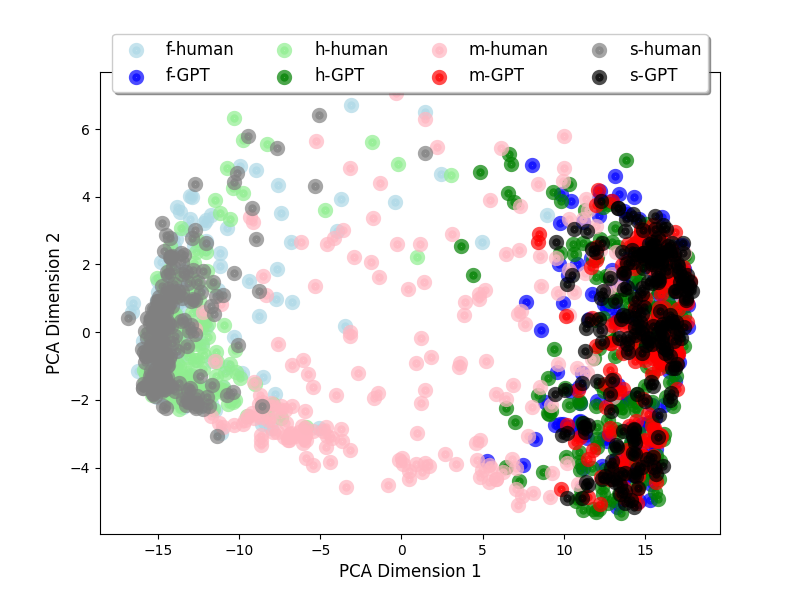}
\end{minipage}
\vspace{-0.2cm}
}
\vspace{-0.2cm}
\caption{\small Representation space visualization on models trained on each topic (in QA)}
\label{fig:topic_visualize}
\end{figure}

\begin{table}[h]
\centering
\caption{\small Transfer Learning (Topic-level) Performance via Linear Probing and Fine-tuning in QA}
\vspace{-0.3cm}
\label{tab:transfer_topic}
\resizebox{0.88\textwidth}{!}
{
\begin{tabular}{c|| ccc | ccc | ccc | ccc  } 
\hline
\hline
Target & \multicolumn{3}{c|}{  finance}&  \multicolumn{3}{c|}{  history} & \multicolumn{3}{c|}{  medical} & \multicolumn{3}{c}{  science} \\
\hline
 & 
 h$\rightarrow$f &   {m$\rightarrow$f} & s$\rightarrow$f  & f$\rightarrow$h &   {m$\rightarrow$h} & s$\rightarrow$h  &   {f$\rightarrow$m} &   {h$\rightarrow$m} &   {s$\rightarrow$m}  & f$\rightarrow$s & h$\rightarrow$s &   {m$\rightarrow$s} \\
No Transfer & 0.953 & 0.859 & 0.977 & 0.982 & 0.852 & 0.991 & 0.809 & 0.729 & 0.802 & 0.979 & 0.955 & 0.873\\
\hline
LP-5 & \red{0.970} & \red{0.881} & \red{0.971} & \red{0.989} & \red{0.889} & \red{0.991} & \red{0.942} & \red{0.923} & \red{0.930} & \red{0.980} & \red{0.978} & \red{0.875}\\
FT-5 & \red{0.952} & \red{0.904} & \red{0.945} & \red{0.940} & \red{0.952} & \red{0.958} & \red{0.873} & 0.836 & 0.840 & \red{0.931} & \red{0.925} & \red{0.879}\\
LP-Scratch-5 & \multicolumn{3}{c|}{0.816 $\pm$ 0.025}&  \multicolumn{3}{c|}{0.813 $\pm$ 0.027} & \multicolumn{3}{c|}{0.786 $\pm$ 0.057} & \multicolumn{3}{c}{0.710 $\pm$ 0.068}  \\
FT-Scratch-5 & \multicolumn{3}{c|}{0.820 $\pm$ 0.034}&  \multicolumn{3}{c|}{0.806 $\pm$ 0.037} & \multicolumn{3}{c|}{0.786 $\pm$ 0.084} & \multicolumn{3}{c}{0.677 $\pm$ 0.097}  \\
\hline
LP-10 & \red{0.976} & 0.906 & \red{0.974} & \red{0.988} & 0.901 &  \red{0.992} & \red{0.942} & \red{0.926} & \red{0.936} &  \red{0.982} & \red{0.976} & 0.886\\
FT-10 & \red{0.965} & 0.935 & \red{0.969} & \red{0.962} & \red{0.951} & \red{0.976} & 0.894 & 0.852 & 0.864 & \red{0.946} & \red{0.927} & \red{0.898}\\
LP-Scratch-10 & \multicolumn{3}{c|}{0.859 $\pm$ 0.029}&  \multicolumn{3}{c|}{0.844 $\pm$ 0.025} & \multicolumn{3}{c|}{0.792 $\pm$ 0.060} & \multicolumn{3}{c}{0.798 $\pm$ 0.025}  \\
FT-Scratch-10 & \multicolumn{3}{c|}{0.891 $\pm$ 0.049}&  \multicolumn{3}{c|}{0.901 $\pm$ 0.035} & \multicolumn{3}{c|}{0.865 $\pm$ 0.046} & \multicolumn{3}{c}{0.834 $\pm$ 0.055}  \\
\hline
\hline
\end{tabular}
}
\end{table}

\section{Theory proofs}\label{app:proof}

In this section, we provide the detailed proofs for our theoretical study. Recall the discussion in Section~\ref{theory}, we aim to compare the strategies to make samplings from $\mathcal{D}_{C1}$ and $\mathcal{D}_{C2}$:
\begin{spreadlines}{0.8em}
\begin{align}\small
\label{eq:data_dist1}
\begin{cases}
\mathcal{D}_{C1} = \mathcal{N}~(\theta_1, \sigma^2I), ~~~~ ||\theta_1||_2 = d,\\
\mathcal{D}_{C2} = \mathcal{N}~(\theta_2, \sigma^2I), ~~~~||\theta_2||_2 = K \cdot d,\\
\end{cases} 
~~d \geq C, K >1 
\end{align}
\end{spreadlines}

\begingroup
\def\thetheorem{\ref{theory}}
\begin{theorem} 
   Given the human training data $\mathcal{D}_H$, ChatGPT training data $\mathcal{D}_{C1}$, $\mathcal{D}_{C2}$. For two classifiers $f_1$ and $f_2$ which are trained to minimize the error under a class-balanced dataset:
    \begin{align*}\small
    f_i &= \argmin_f \text{Pr.}(f(x)\neq y), ~~\text{where}~\begin{cases}
      x\sim\mathcal{D}_{Ci},  ~\text{if}~~ y=1\\
     x\sim\mathcal{D}_{H}, ~\text{if}~~ y=0 \\
    \end{cases} 
    \end{align*}
Suppose the maximal FNA that $f_1$ can achieve is denoted as $\sup \Gamma(f_1)$. Then, with probability at least $\Big(1 - \big(\frac{\pi}{2} - \frac{C}{d} + \Omega(\frac{C}{d})^3\big) / \big(\frac{\pi}{2} - \frac{C}{K d}\big)\Big)$, we have the relation:
\begin{equation}\label{eq:ratio}\small
    \Big(\frac{\Gamma(f_2)}{\sup\Gamma(f_1)}\Big)^2 \geq \Big(1 + (K-1) \cdot \frac{1}{1+2T\cdot \Omega(1/d)}\Big)>1.
\end{equation}
\end{theorem}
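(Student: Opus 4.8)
The plan is to explicitly characterize the optimal classifiers $f_1$ and $f_2$ under the Gaussian model, compute their False Negative Areas geometrically, and then compare. First I would observe that for a class-balanced problem with two spherical Gaussians $\mathcal{D}_H = \mathcal{N}(0,\sigma^2 I)$ and $\mathcal{D}_{Ci} = \mathcal{N}(\theta_i,\sigma^2 I)$, the Bayes-optimal classifier is the linear separator given by the perpendicular bisector hyperplane of the segment joining $0$ and $\theta_i$: that is, the line through the midpoint $\theta_i/2$ with normal direction $\theta_i$. So $f_i$ is the line $\{x : \langle x, \theta_i\rangle = \|\theta_i\|_2^2/2\}$, and the ChatGPT-predicted halfplane is $\langle x,\theta_i\rangle > \|\theta_i\|_2^2/2$. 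The key geometric quantity is the \emph{slope} of this line relative to the $x_1$ (ChatGPT) axis: the line's normal is $\theta_i$, so the line's tilt away from vertical (the $x_2$ axis) is governed by the angle of $\theta_i$ relative to the $x_1$-axis. Since $\theta_i$ is constrained to lie in the ChatGPT region $\{x_1 \ge C\}$ with $\|\theta_i\|_2 = d$ (resp. $Kd$), the angle $\phi_i$ of $\theta_i$ to the $x_1$-axis satisfies $|\sin\phi_i| \le \sqrt{1 - (C/d)^2}$ (resp. $\sqrt{1-(C/Kd)^2}$). Because $\theta_i$ is uniformly distributed on the arc of allowed directions, I would compute the distribution of $\phi_i$ and hence the distribution of the line's slope.

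Next I would set up the FNA integral. The False Negative Area is the region bounded by the decision line $f_i$, the vertical line $x_1 = C$, and the horizontals $x_2 = \pm T$ — this is the part of the true ChatGPT region $\{x_1 \ge C\}$ that $f_i$ misclassifies as human (the sliver where the tilted line cuts below $x_1=C$ on one side). For a line of a given slope passing through $\theta_i/2$, this area is an elementary function: roughly a triangle or trapezoid whose base along $x_1=C$ has length proportional to how far the line's intersection with $x_2 = \pm T$ reaches past $x_1 = C$. The crucial point is that a \emph{flatter} line (normal closer to the $x_2$-axis, i.e. $\theta_i$ closer to the $x_2$ direction) produces a larger FNA, and $f_2$ — whose center $\theta_2$ has a wider allowed angular range because $C/(Kd) < C/d$ — can be flatter than $f_1$ can ever be. I would bound $\sup\Gamma(f_1)$ by plugging in the extreme allowed angle for $\theta_1$ ($\sin\phi_1 = \sqrt{1-(C/d)^2}$, giving the flattest permissible $f_1$), expand $\arccos(C/d) = \pi/2 - C/d + \Omega((C/d)^3)$ using the Taylor series of $\arcsin$, and similarly express the relevant lengths for $f_2$ in terms of $T$, $d$, $K$. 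Then the ratio $\Gamma(f_2)^2 / \sup\Gamma(f_1)^2$ reduces to comparing squared base-lengths plus the $Kd$ versus $d$ offset of the midpoint, which should produce the stated $1 + (K-1)\cdot\frac{1}{1+2T\Omega(1/d)}$ form — the $K-1$ coming from the extra displacement $\|\theta_2\|_2 - \|\theta_1\|_2$ scaled appropriately and the $2T\Omega(1/d)$ correction coming from the finite-$T$ truncation of the sliver.

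For the probability statement, I would compute the chance that a uniformly random $\theta_2$ on its allowed arc yields an $f_2$ flat enough to beat $\sup\Gamma(f_1)$: the favorable arc is the complement of the sub-arc where $\theta_2$'s angle is within the range achievable by $\theta_1$, so the probability is $1 - (\text{length of }\theta_1\text{'s arc}) / (\text{length of }\theta_2\text{'s arc})$. The arc lengths are $2\arccos(C/d) = 2(\pi/2 - C/d + \Omega((C/d)^3))$ and $2\arccos(C/(Kd)) = 2(\pi/2 - C/(Kd) + \dots)$ respectively; dropping the higher-order term in the denominator (which only weakens the bound) gives exactly $1 - \big(\tfrac{\pi}{2} - \tfrac{C}{d} + \Omega((C/d)^3)\big)\big/\big(\tfrac{\pi}{2} - \tfrac{C}{Kd}\big)$.

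The main obstacle I anticipate is the bookkeeping in the FNA integral: getting the geometry of the enclosed sliver exactly right — in particular which vertices the region has (it changes shape depending on whether the decision line crosses $x_1 = C$ above or below $x_2 = \pm T$), and tracking where the line passes (through $\theta_i/2$, not the origin) so that the $\|\theta_i\|_2$-dependence enters correctly. The asymptotic $\Omega(\cdot)$ notation is doing real work in hiding lower-order terms, so I would need to be careful that every approximation (Taylor expansion of $\arccos$, small-angle approximations, the $1/d$ expansions) is applied in a regime where it is valid — presumably $d \gg C$ and $d \gg T$ — and that the inequalities all point the right way after the approximations, so that the final bound is genuinely a lower bound on the ratio and a lower bound on the probability.
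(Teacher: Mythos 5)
Your proposal follows essentially the same route as the paper's proof: the Bayes-optimal $f_i$ is the perpendicular bisector of the segment from $0$ to $\theta_i$, the FNA is the area of the triangular sliver that this line cuts off against $x_1=C$ and $x_2=\pm T$, the worst case for $f_1$ is attained when $\theta_1$ sits on the boundary $x_1=C$ (the flattest admissible boundary), and the probability is the fraction of $\theta_2$'s admissible arc lying at angles beyond $\arccos(C/d)$, expanded via the Taylor series of $\arccos$. The only step you leave unexecuted is the closing algebra, where the paper evaluates the two areas explicitly with $B=\sqrt{d^2-C^2}$ and reduces the ratio to $1+(K-1)\cdot\frac{B^2+C^2}{B^2-C^2+2BT}\ \geq\ 1+(K-1)\cdot\frac{1}{1+2T/d}$, matching the claimed bound.
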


\begin{proof}
    We first aim to find the worst model that $f_1$ can achieve largest FNA. To achieve this goal, we first define its center $\theta_1$ has a location $(h, a)$ where $h^2 +a^2 = d^2$, $h\geq C$. We can suppose $a\geq 0$ without loss of generality since the data space is symmetric on $x_2 = 0$. Therefore, given two classes for classification, with the negative class $\mathcal{D}_H = \gN(0, \sigma^2I)$ and positive class: $\mathcal{D}_{C1} = \gN((h,a), \sigma^2I)$, we can find the optimal classifier $f$ has a decision boundary which is orthogonal to this line that passes $(0,0)$ and $(h,a)$ and passes their center point $(\frac{h}{2}, \frac{a}{2})$. In specifics, we can get the expression of $f_1$'s decision boundary: 
    \begin{equation}
        l_1: y = -\frac{h}{a} x + \frac{a^2 + h^2}{2a}
    \end{equation}
    Next, we will show that this model $f_1$ will achieve the worst case with largest FNA (the area of the region enclosed by $l_1$, $x_2 = C$ and $x_1 = \pm T$), when $h = C$. To calculate the area of the enclosed region (which is a triangle), we find it has a tall and height: 
    \begin{equation}
        \text{Tall}: -\frac{d^2 - a^2}{a}\cdot C + \frac{d^2}{2a} - C,~~~\text{Height}: \frac{d^2}{2h} + \frac{\sqrt{d^2 - h^2}}{h}\cdot T - T
    \end{equation}
    It is easy to see the tall is monotonously increasing as $a$ increases, and height is monotonously decreasing as $h$ increases (by calculating their derivatives). This fact suggests that the tall is also a decreasing function for $h$, as $a$ and $h$ has the relation $a^2 + h^2 = d^2$. Therefore, FNR of $f_1$, which is decided by the multiplication of tall and height, is a decreasing function in terms of $h$. Given that $h \leq C$, the worst case is achieved when $h = C$. Under $h=C$, the model $f_1$ has an FNR: 
    \begin{equation}
        \sup \Gamma(f_1) = \frac{(B^2 + 2 BT - C^2)^2}{8BC}, ~~~\text{where}~B^2 +C^2 = d^2.
    \end{equation}
    Next, we discuss one special case about model $f_2$, which is denoted as $f^*_2$. Then, we calculate the probability that other possible $f_2$ is worse than this specific case $f^*_2$. In detail, we consider that $\theta_2 = K \cdot \theta1$, which means they are in the same direction from origin. For the model $f^*_2$, we can calculate the model for this special case of $f^*_2$:
    \begin{equation}
        y = - \frac{C}{B} + K\cdot \frac{B^2 + C^2}{2B}
    \end{equation}
    and it has an FNR:
    \begin{equation}
        \Gamma(f^*_2) = \frac{(2BT + KB^2 + (K-2)C^2)^2}{8BC}
    \end{equation}
    Next, we calculate the ratio between $\Gamma(f^*_2)$ and $\sup \Gamma(f_1)$: 
    \begin{align*}
    \begin{split}
    \Big(\frac{\Gamma(f^*_2)}{\sup \Gamma(f_1)}\Big)^2 &= \frac{2BT + KB^2 + (K-2)C^2}{2BT + B^2 - C^2} = 1 + (K-1)\cdot \frac{B^2 + C^2}{B^2 - C^2 + 2BT}\\
&= 1 + (K-1)\cdot \frac{1}{1 + 2T \frac{B}{d^2} - 2 \frac{C^2}{d^2}} \geq 1 + (K-1)\cdot \frac{1}{1 + 2T \frac{d}{d^2}} \\ &= 1 + (K-1) \cdot \frac{1}{1 + 2 T/d} > 1
    \end{split}
    \end{align*}
    Note that $K$ is a number larger than 1, we have shown that the FNA relationship in the theorem. Next, we calculate the chance that $f_2$ has a worse error than $f^*_2$. 
    Based on the previous calculation, the FNR of any model is an increasing function w.r.t to the $x_1$ coordinate. Moreover, the model $f_2$'s center $\theta_2$ is uniformly distributed, under the arch $||\theta_2|| = K \cdot d$. The possibility of $f_2$ is worse than $f^*_2$ lies on the arch between $x_1 = KC$ and $X_1 = C$. 
    Therefore, we find the probability of $\theta_2$ lying in this arch: 
    \begin{align*}
    \begin{split}
        1 - \frac{ \arccos \frac{C}{d}}{ \arccos \frac{C}{Kd}} &=
        1 - \big(\frac{\pi}{2} - \frac{C}{d} + \Omega(\frac{C}{d})^3\big) / \big(\frac{\pi}{2} - \frac{C}{K d} + \Omega(\frac{C}{Kd})^3\big)\\
        &\geq 1 - \big(\frac{\pi}{2} - \frac{C}{d} + \Omega(\frac{C}{d})^3\big) / \big(\frac{\pi}{2} - \frac{C}{K d}\big)\\
    \end{split}
    \end{align*}
\end{proof}

\section{Other Detection Methods}\label{app:other_methods}

In this section, we discuss other methodologies which can be used for ChatGPT text detection.

\subsection{Score-based methods}

The work of \cite{gehrmann2019gltr} proposes the \textbf{GLTR} method. It records the rank (based on the probability score of an accessible model such as GPT2) of each token (in the vocabulary) and group them to 4 categories, which are top 10, top 100, and top 1,000 and others. Then, a linear model is trained using these 4 numbers as features for prediction. 

\textbf{GPTZero}~\cite{gptzero} is a public available tool designed for detecting LLM-generated texts by employing two principal linguistic metrics, which are ``perplexity'' and ``burstiness''. 
Specifically, \textbf{perplexity} is a measurement of how easy or difficult to understand and predict the next words in a sentence.
A sentence with a lower perplexity typically flows smoothly and naturally, and allows humans to anticipate what might come next. Instead, sentences with higher perplexity are often regarded as confusing, difficult to follow, or unnatural in their structures and meanings. GPTZero estimates perplexity through the output score from a fine-tuned\footnote{https://gptzero.me/technology} GPT-2 model. In detail, given a passage $x = (w_1, .., w_k)$, it calculates the perplexity score as:
\begin{align*}
    \text{Perplexity} \propto \sum^k_{i=1} \log p(w_i|w_1, w_{i-1}),
\end{align*}
where $p(\cdot)$ is the output probability of GPT-2. Notably, to test if a passage is generated from other LLMs like ChatGPT, the perplexity is also calculated by the same GPT-2 model. 
Beyond the perplexity, a high burstiness sentence refers to a sentence that exhibits a sudden, unexpected change or deviation from the typical language patterns or topic. GPTZero incorporates a ``burstiness'' 
check to analyze the text style as the content generated by LLMs tends to maintain consistency throughout the full passage. GPTZero calculates burstiness by using the standard deviation of the perplexity scores of each sentence in a given passage. GPTZero predicts a sentence or passage as LLM generated if the passage has a high perplexity or low burstiness.

\subsection{Model-Based Methods.} 
The model-based methods train deep learning models, which 
directly take the test passage as an input of the classification model. The GPT-2 Detector~\citep{solaiman2019release} fine-tunes a pre-trained language model RoBERTa~\cite{liu2019roberta} models (RoBERTa-base and RoBERTa-large) for a binary classification task to distinguish GPT2-output data samples and human written samples from OpenWebText dataset.  With the similar idea, \textit{GPT-Sentinel}~\citep{chen2023gpt} and the work~\citep{guo2023close} also fine-tune existing text classification models, such as RoBERTa models, to detect ChatGPT texts. In detail, GPT-Sentinel~\cite{chen2023gpt} trains a RoBERTa-base or a T5 model on a binary classification dataset, with human texts from OpenWebText and LLM generated texts obtained by using ChatGPT to rephrase texts from OpenWebText. Beyond RoBERTa, GPT-Sentinel also proposes a similar classification strategy based on another text model structure, called T5 model. The work~\citep{guo2023close} trains a RoBERTa classification model on HC-3 dataset, which includes human answers and ChatGPT answers to questions from sources such as Wikipedia and Reddit.

\subsection{Similarity Based methods}

\textbf{GPT Paternity Test (GPT-Pat)~\citep{yu2023gpt}} proposes a different detection strategy beyond binary classification tasks. In particular, it assumes that if an LLM like ChatGPT is asked a same question for twice to generate two answers, these two answers tend to have a high similarity. Based on this assumption, given a test passage $x$, GPT-Pat first queries the ChatGPT model to generate a question based on the content of $x$, and inputs the question to ChatGPT again to query another answer $x'$. Then, it trains a similarity model (fine-tuned from RoBERTa) to measure the similarity of $x$ and $x'$. If $x$ and $x'$ are highly similar, it predicts $x$ as LLM generated. Similarly, \textbf{DNA-GPT}~\citep{yang2023dna} also propose a strategy to let ChatGPT re-generate texts for comparison. However, the method in \citep{yang2023dna} does not involve the training process.

\end{document}